\def\BibTeX{{\rm B\kern-.05em{\sc i\kern-.025em b}\kern-.08em
		T\kern-.1667em\lower.7ex\hbox{E}\kern-.125emX}}
\newcommand*\circled[1]{\tikz[baseline=(char.base)]{
		\node[shape=circle,draw,inner sep=1pt] (char) {#1};}}
\newcommand{\eg}{{\it e.g.,}\xspace}
\newcommand{\vs}{{\it vs.}\xspace}
\newcommand{\etal}{{\it et~al.}\xspace}
\newcommand{\ie}{{\it i.e.,}\xspace}
\newcommand{\etc}{{\it etc.}\xspace}
\newcommand{\ca}{{\it (a) }}
\newcommand{\cb}{{\it (b) }}
\theoremstyle{definition}
\newtheorem{corollary}{Corollary}
\newtheorem{theorem}{Theorem}
\newcommand{\lidar}{LiDAR\xspace}
\definecolor{orcidlogocol}{HTML}{A6CE39}
\tikzset{
  orcidlogo/.pic={
    \fill[orcidlogocol] svg{M256,128c0,70.7-57.3,128-128,128C57.3,256,0,198.7,0,128C0,57.3,57.3,0,128,0C198.7,0,256,57.3,256,128z};
    \fill[white] svg{M86.3,186.2H70.9V79.1h15.4v48.4V186.2z}
                 svg{M108.9,79.1h41.6c39.6,0,57,28.3,57,53.6c0,27.5-21.5,53.6-56.8,53.6h-41.8V79.1z M124.3,172.4h24.5c34.9,0,42.9-26.5,42.9-39.7c0-21.5-13.7-39.7-43.7-39.7h-23.7V172.4z}
                 svg{M88.7,56.8c0,5.5-4.5,10.1-10.1,10.1c-5.6,0-10.1-4.6-10.1-10.1c0-5.6,4.5-10.1,10.1-10.1C84.2,46.7,88.7,51.3,88.7,56.8z};
  }
}
\newcommand\orcidicon[1]{\href{https://orcid.org/#1}{\mbox{\scalerel*{
\begin{tikzpicture}[yscale=-1,transform shape]
\pic{orcidlogo};
\end{tikzpicture}
}{|}}}}
\newcommand\copyrighttext{%
	\footnotesize
	\copyright 2022 IEEE. Personal use of this material is permitted. Permission from IEEE must be obtained for all other uses, in any current or future media, including reprinting/republishing this material for advertising or promotional purposes, creating new collective works, for resale or redistribution to servers or lists, or reuse of any copyrighted component of this work in other works.	
}
\newcommand\copyrightnotice{%
	\begin{tikzpicture}[remember picture,overlay]
		\node[anchor=south,yshift=10pt] at (current page.south) {\fbox{\parbox{\dimexpr\textwidth-\fboxsep-\fboxrule\relax}{\copyrighttext}}};
	\end{tikzpicture}%
}%
\begin{document}

\title{Verifiable Obstacle Detection}

\author{
\IEEEauthorblockN{
   Ayoosh Bansal%
   \IEEEauthorrefmark{1}%
   \orcidicon{0000-0002-4848-6850},
   Hunmin Kim%
   \IEEEauthorrefmark{2},
	 Simon Yu%
	 \IEEEauthorrefmark{1}%
	 \orcidicon{0000-0002-9845-6983},
    Bo Li%
    \IEEEauthorrefmark{1},
    Naira Hovakimyan%
   \IEEEauthorrefmark{1},
   Marco Caccamo%
   \IEEEauthorrefmark{3},
   Lui Sha%
   \IEEEauthorrefmark{1}
}

\IEEEauthorblockA{
   \IEEEauthorrefmark{1}University of Illinois Urbana-Champaign
   \{ayooshb2, jundayu, lbo, nhovakim, lrs\}@illinois.edu,
}
\IEEEauthorblockA{
   \IEEEauthorrefmark{2}Mercer University
    kim\_h@mercer.edu,
   \IEEEauthorrefmark{3}Technical University of Munich
   mcaccamo@tum.de
}
}

\maketitle

\copyrightnotice

\begin{abstract}

Perception of obstacles remains a critical safety concern for autonomous vehicles.
Real-world collisions have shown that the autonomy faults leading to fatal collisions originate from obstacle existence detection.
Open source autonomous driving implementations show a perception pipeline with complex interdependent Deep Neural Networks.
These networks are not fully verifiable, making them unsuitable for safety-critical tasks.

In this work, we present a safety verification of an existing \lidar based classical obstacle detection algorithm.
We establish strict bounds on the capabilities of this obstacle detection algorithm.
Given safety standards, such bounds allow for determining \lidar sensor properties that would reliably satisfy the standards.
Such analysis has as yet been unattainable for neural network based perception systems.
We provide a rigorous analysis of the obstacle detection system with empirical results based on real-world sensor data.

\end{abstract}

\begin{IEEEkeywords}
Autonomous vehicles, Vehicle safety, Object detection
\end{IEEEkeywords}

\section{Introduction}
\label{sec:introduction}

Autonomous Vehicles (AV) will be among the most significant technological achievements of current times, with the potential to save and improve lives~\cite{av_imp_1, av_imp_2}.
However, it is unclear at what point AV will be safer than the human-driven vehicles~\cite{dirty_dozen}.
The imperfection of current AV is evident in the various crashes that have been attributed, in part, to the autonomous control of involved vehicles~\cite{tesla_2, ubercrash, tesla_3, tesla_4, tesla_1, tesla_1_1, tesla_japan, boudette2021happened, tesla_5, tesla_6, tesla_7, tesla_8, tesla_9, tesla_10, tesla_11, tesla_12}.

Neural networks and artificial intelligence have enabled capabilities in cyber-physical systems that might have remained unachievable otherwise.
Deep Neural Networks (DNN) for object detection, classification, and predictive tracking, are crucial for perceiving an AV's environment in real-time and planning complex maneuvers an AV must execute.
But these technologies are inherently unverifiable, \ie incapable of being verified~\cite{koopman2016challenges, safety2016ieee, easy_fool_ai} that leads to, as yet unsolvable, safety concerns~\cite{koopman2018heavy,biggio2013evasion, amodei2016concrete, huang2017safety, tian2018deeptest, li2020deepdyve, willers2020safety}.
Safety-critical software are required to be analyzable and verifiable~\cite{feiler2013four,heimdahl2016software}, a role DNN solutions are not yet ready to fulfill~\cite{faa2016adaptive,jenn2020identifying,pereira2020challenges}.

The impasse caused by the necessity of DNN solutions to enable AV's mission capabilities and their unsuitability for safety-critical tasks can be resolved by decoupling the mission and safety requirements.
The disassociated fulfillment of safety and mission requirements has been successful in
system architectures like Simplex~\cite{simplex_early,simplex_original}, leading to systems that work with high performance in typical cases but provide verifiable safe behavior in safety-critical conditions.
But thus far, such a separation has not been achieved in AV.

In this work, we lay the foundation for decoupling of safety and mission responsibilities for AV.
We describe a minimal set of requirements for safety-critical perception with a focus on obstacle existence detection in Section~\ref{sec:minimal_ca}.
Unlike the mission-critical requirement of perceiving obstacles and predicting their trajectories for complex maneuvers planning,  the safety-critical requirement for collision avoidance is to reliably detect the existence of obstacles that the AV may collide with.
The safety-critical obstacle existence detection can then be used in various ways, like to detect and correct faults in the mission critical perception systems, fused as an ensemble~\cite{wei2018fusion,gurel2021knowledge}, or make control decisions like brake to stop if all else fails.

In the absence of verifiable DNN based solutions, we turn to classical obstacle detection algorithms, that are verifiable, \ie capable of being fully analyzed and verified, to determine their ability to fulfill the safety-critical requirements.
Traditional \lidar based obstacle detection algorithms~\cite{wang2022review}, based on geometric properties, are excellent candidates for verifiable safety-critical obstacle detection.
\lidar sensors have shown incredible promises for their use in autonomous driving~\cite{li2020lidar}.
This has, in turn, accelerated the improvements in \lidar sensor technology, increasing the range, scanning frequency, and resolution of the sensor~\cite{roriz2021automotive}.
\lidar sensors are also superior to stereo cameras for 3D obstacle detection as they only suffer a linear error growth with distance as compared to quadratic for stereo camera~\cite{wang2019pseudo}.
These factors made the \lidar sensor the natural choice for this work.

Depth Clustering~\cite{bogoslavskyi16iros, bogoslavskyi17pfg} a \lidar based range image segmentation algorithm is chosen as the example algorithm in this work.
The core component of the algorithm analyzed here is the ground removal \ie separating the flat drivable ground from obstacles that need to be avoided.
These bounds are referred to as the \textit{Detectability Model} in this work.
Such bounds satisfy the analyzability and verifiability requirements in safety-critical system engineering.
The value of such a Detectability Model comes from the predictability of faults in obstacle existence detection and reliable mitigation.
Given desired safety standards and properties of obstacles, algorithm parameters and \lidar sensor parameters can be chosen to meet the safety standards reliably.

The \textbf{contributions} of this work can be summarized as:
\begin{itemize}
	\item Minimized sufficient requirements for safety-critical obstacle existence detection for collision avoidance ($\S$\ref{sec:minimal_ca}).
	\item Detectability model for an existing classical obstacle detection algorithm, Depth Clustering, with human perceptible bounds on capabilities and limitations ($\S$\ref{sec:model}).
	\item An evaluation\footnote{\url{https://github.com/CPS-IL/verifiable-OD}} based on real sensor \mbox{data~\cite{sun2020scalability} ($\S$\ref{sec:eval}).}
\end{itemize}

\section{Motivation and Overview}
\label{sec:motivation}

Obstacle existence detection fault, \ie False Negative (FN) in perception are a grave safety concern~\cite{yang2021introspective}.
A survey of fatal collisions involving AV (Table~\ref{tab:crash_survey}) points to recurring FN errors.
Other fatalities involving AV~\cite{tesla_1,tesla_1_1,tesla_japan,boudette2021happened,tesla_5,tesla_6,tesla_7,tesla_8,tesla_9,tesla_10,tesla_11,tesla_12}, excluded from Table~\ref{tab:crash_survey} due to unavailability of investigation reports, seem to follow this pattern.
These underlying safety concerns are the primary challenge in adopting complete autonomy~\cite{nhtsa_investigation,kalra2017challenges,penmetsa2021effects,li2022safety,r2022liability,jing4011917listen}.
Learning from these incidents and acknowledging the impossibility of all encompassing safety on the road, we limit our focus to FN.

DNN verification, even for simple properties, is an NP-complete problem~\cite{katz2017reluplex}.
Gharib \etal~\cite{gharib2018safety} describe the need and current lack of verification methods for machine learning components used in safety-critical applications.
Empirical risk minimization, a foundational principle of modern statistical machine learning, fails to satisfy the high robustness requirements of safety-critical applications~\cite{varshney2016engineering}.
There is still an open requirement for verification techniques that can validate the behavior of a trained network under all circumstances and not just expected safe input space~\cite{urban2021review}.
The vastness of the input sets for real-world problems, like perception, renders reachability analysis impractical for the DNN used in AV.

Analyzability and verifiability are the crucial components of the certification process of safety-critical \mbox{systems~\cite{feiler2013four,heimdahl2016software}}.
Verifiable algorithms, where the \textit{causality} between the input parameters and the algorithm result can be established, are inherently suitable for safety-critical applications.
This is in contrast to object detection DNN, trained using supervised learning, which effectively captures \textit{correlations} between training input and labels.
Consider the following simple example where $y$ is obstacle height, $x$ is obstacle distance from the AV, $a$ and $b$ are constant parameters based on \lidar properties:
\begin{equation}
    y \geq ax + b \label{eq:ideal}
\end{equation}
If we want to establish that \eqref{eq:ideal} is the \textit{detectability model}, \ie when the condition in \eqref{eq:ideal} is met, obstacles are always detected by the AV,
the following must be defined:

{\it \textbf{Requirements}}: A definition of minimal requirements for what it means to \textit{detect} an obstacle ($\S$\ref{sec:minimal_ca}).

{\it \textbf{Constraints}}: A set of well defined constraints that must be met for the model to be applicable ($\S$\ref{sec:paramconstraints}).

{\it \textbf{Verification}}: A deterministic analysis verifying the detectability model ($\S$\ref{sec:model}).

Let's assume that an AV safety standard requires that the AV be able to detect all obstacles of a minimum height of $10$~$cm$.
Further, let's assume all vehicles and structures on the road are also mandated to be taller than this minimum height by a road safety rule.
Using \lidar and AV parameters to determine $a$ and $b$, the detectability model \eqref{eq:ideal} can be used to determine the minimum distance at which such obstacles can be detected.
This minimum distance, in conjunction with the braking capability of the AV, can then be used to determine the max speed at which the AV can safely travel.
This example, admittedly simple, shows how a verified detectability model can bring together road safety rules, AV parameters and AV safety policies to provide deterministic collision safety.

\begin{table}[t]
    \centering
    \caption{\label{tab:crash_survey}
        Survey of AV Involved Fatalities}
    \begin{tabular}{p{0.05\columnwidth} | p{0.85\columnwidth}}
        \toprule
        Ref. 		&   Autonomy Response
                \tablefootnote{The comments brief the driver assist system's response during the incident and are not necessarily the causal faults for the incident.
                    }
       \\ \midrule


        \cite{tesla_2} & A truck in the path of the vehicle was not detected and no evasive actions like braking or steering away were initiated.
        \\ \midrule

        \cite{ubercrash} & Low confidence, unstable classifications of a pedestrian led to the perception system ignoring the existence of a pedestrian.
        \\ \midrule

        \cite{tesla_3} & A crash attenuator in the path of the vehicle was not detected.
        \\ \midrule


        \cite{tesla_4} & A white semi trailer in the path of the vehicle was not detected.\\



        \bottomrule
    \end{tabular}
\end{table}

\section{Related Work}
\label{sec:relatedwork}

\textit{Classical Obstacle Detection:}
An approach for long-range obstacle detection based on stereo cameras was proposed by Pinggera~\etal~\cite{pinggera2015high}.
While they successfully detect patches on most objects within a long range, it is unclear whether this approach is enough to avoid collisions and whether all obstacles that pose collision risk are detected.
While the approach shows promise, its performance under common distortions like bright spots, \etc, is not shown, and a further study on its limitations is required to use it in safety-critical tasks.
Various \lidar based geometric algorithms were considered as part of this work~\cite{himmelsbach2010fast, korchev2013real, asvadi2015detection, chu2017fast, zermas2017fast}.
Each algorithm has a similar flow; identifying points on the ground vs. obstacles, followed by clustering, segmentation, and/or classification.
Depth Clustering~\cite{bogoslavskyi16iros, bogoslavskyi17pfg} is chosen as the primary example due to its deterministic explainable behavior, flexible parameters to optimize tradeoffs, and public availability of an efficient C++ implementation with extremely low runtime of $40$ $ms$ on an embedded Jetson Xavier platform, using a single CPU core only, for a point cloud with $169,600$ points~\cite{chen2021lidar}.
The algorithm has also garnered interest in recent works in literature~\cite{liu2020removing,chen2021lidar,dang2021sensor,shen2021fast,tian2021unsupervised,vobecky2022drive}.

\textit{Neural Network Verification:}
Albarghouthi~\cite{albarghouthi2021introduction} described the challenges for neural network verification, including scale, complexity, and dynamism of the environment, all applicable for their use in Autonomous Driving.
Liu~\etal~\cite{liu2019algorithms} survey existing verification methods and classify the verification methods into three categories: reachability, optimization, and search. They identify tradeoffs between scale and completeness of the verification methods.
Even when the deep networks can be verified, it is done so for small input sets only~\cite{tran2020verification}.
Another survey on the safety and trustworthiness of DNNs~\cite{huang2020survey} identified various challenges, including; a physical representation of the verification metrics, completeness of the verifiable properties, and scalability to complex DNN.
Verification techniques have also been proposed to explore around available inputs by adding adversarial or context specific distortions~\cite{huang2017safety, tian2018deeptest}, improving the input coverage.
However, this does not imply complete verification and dependable predictability of behavior.
Hardware reliability~\cite{li2020deepdyve} does not protect against algorithmic faults.
Fully verifiable, analyzable and explainable DNNs performing real-world object detection in autonomous vehicles remain an elusive goal.

\textit{Collision Avoidance:}
This work complements typical collision avoidance systems by potentially providing
additional triggers for the emergency braking systems to engage~\cite{seiler1998development,funke2016collision,cheng2019longitudinal}, \eg when comparing the output from the verifiable algorithm to the DNN, a safety-critical FN in DNN output is determined.
Other collision avoidance systems leverage cooperative communication~\cite{lin2014active}, however, our work focuses on all obstacles including human driven vehicles and pedestrians.
Motion planning~\cite{wang2019crash,lee2019collision,tahir2019heuristic} and risk assessment~\cite{noh2017decision,noh2018decision,shin2018human,yu2019occlusion,li2021risk} based collision avoidance systems would be benefited by using a verifiable algorithm to detect obstacles in the environment.

\section{Minimal Requirements}
\label{sec:minimal_ca}

\begin{figure*}[!t]
    \centering
    \includegraphics[width=\linewidth,keepaspectratio]{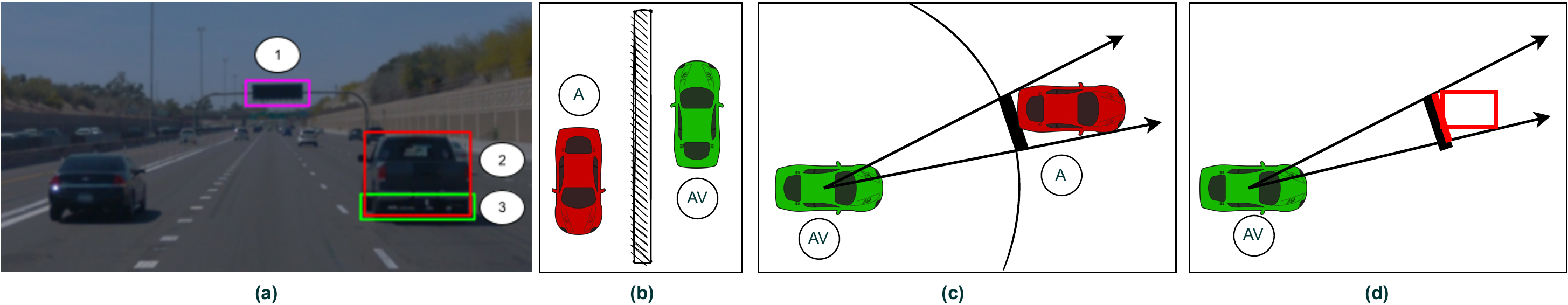}
    \caption{\label{fig:req_examples} Example scenes. (a) Role of obstacle height. (b) Distance and Occlusion of obstacles. (c) Projection of Obstacles.
    }
\end{figure*}

In this section, we define the minimally sufficient, though not necessary and sufficient, requirements for safety-critical obstacle detection and collision avoidance.
These requirements define a minimized set of features of an obstacle that the AV perception system should perceive to establish the existence of the obstacle.
While some of the following observations are already well established, this work is the first to form a minimally sufficient set for safety-critical collision avoidance.
Further, some requirements like object classification are typically considered crucial for all aspects of perception in AV, however, we show that while it is valuable for mission-critical requirements, it is not for safety-critical collision avoidance.

\subsection{Classification}
\label{sec:classification_not_required}

We will use the mathematical problem formulation of reach-avoid control to justify why we do not need to classify obstacles for collision avoidance purposes.
The current section is motivated by~\cite{summers2010verification,summers2011stochastic} but we simplify the notations and the problem in a deterministic setting while the references consider stochastic problem formulations.
Let  $x_k \in X$ be the 2D-position of the vehicle at discrete time instance $k \geq 0$, where $X \subseteq {\mathbb{R}}^2$ is the set of the state space. The set $D \subset X$ is the destination, and the sets $O_i(k) \subset X$ for $i \in M_O \triangleq \{1,2,\cdots,M\}$ are obstacles, where $O_i(k)$ represents the $i^{th}$ obstacle and $M_O$ is the index set for the obstacles. The set $O_i(k)$ could be time-varying but it is assumed that the obstacles do not overlap with the destination, \ie $\cup_{i \in M_O} (D \cap O_i(k)) = \emptyset$ $\forall$ $k$. For brevity, further mentions of $O_i(k)$ are reduced to $O_i$.
Now, the reach-avoid control problem is to drive the vehicle to the destination $D$ while avoiding obstacles $O_i$ for $i \in M_O$ within finite time horizon $N$, given initial condition $x_0$.
The success of this mission can be characterized by the following index $r$:
\begin{align}
	r& \triangleq \textstyle \sum_{j=0}^N \big(\Pi_{i=1}^M \Pi_{t=0}^{j-1} \bold{1}_{X \setminus O_i}(x_t)\big) \bold{1}_D(x_j)\nonumber\\
	&=
	\left\{
	\begin{array}{cc}
		1, &  {\rm if \ } \exists j \in [0, N]: x_j \in D \wedge \\
		&\forall t \in [0,j-1]: x_t \in \cap_{i=1}^M X \setminus O_i\\
		0, & {\rm otherwise,}\\
	\end{array}
	\right.
	\label{eq:formulation0}
\end{align}
where $\bold{1}_S(\cdot): X \rightarrow \{0,1\}$ is the indicator function for a set $S$, and $\wedge$ is the logical AND. In short, $r=1$ if and only if the objective is achieved. The index $r$ in~\eqref{eq:formulation0} can be used as a cost function of the optimal control problem as in~\cite{summers2010verification,summers2011stochastic}. Therefore, one is required to evaluate the indicator functions in~\eqref{eq:formulation0}, which means that it is required to know the set $D$ and $X \setminus O_i$ for $i \in M_O$.

However, index $r$ in~\eqref{eq:formulation0} can be equivalently formulated as
\begin{align}
	r& = \textstyle \sum_{j=0}^N \big( \Pi_{t=0}^{j-1} \bold{1}_{X \setminus (\cup_{i \in M_O} O_i)}(x_t)\big) \bold{1}_D(x_j)
	\label{eq:formulation1}
\end{align}
and this expression can be used instead for the optimal reach-avoid control problem.
The formulation~\eqref{eq:formulation1} indicates that one could also address the control problem only with knowing $\cup_{i \in M_O} O_i$, but not individual obstacle sets $O_i$, \ie
one does not need to classify/distinguish individual obstacles for the reach-avoid control purpose.

It should be noted that classification adds valuable information that supports advanced features like predictive tracking, maneuver planning and improves AV performance.
The argument here is only that object classification is not a necessity for obstacle avoidance and
therefore not a part of minimal requirements for safety critical obstacle detection.
\subsection{Collision Risk}
\label{sub:rrr}

Collision avoidance involves many factors, from perception to vehicular control.
While the dynamics and ethics of collision avoidance are complex~\cite{goodall2014ethical}, the safety-critical obstacle detection system is required to detect all obstacles that can potentially collide with the AV.
We utilize a physical model for collision risk from our prior work~\cite{9460196}, where
the potential risk of collision is determined by the overlap of the existence regions~\cite{schmidt2006research} of obstacles and AV within the AV's time to stop.

\subsection{Height}

The height of an obstacle is only useful in making a binary determination for collision avoidance, \ie
whether or not the obstacle is completely clear above the height of the AV.
For example, in Figure~\ref{fig:req_examples}~(a)  the overhead road sign \circled{1}  is completely above the AV, its exact height has no implication for collision avoidance.
The box \circled{2} contains valuable information that is required to identify the obstacle within the box to be a vehicle, however as discussed in Section~\ref{sec:classification_not_required}, such a recognition of the obstacle class is not a requirement for collision avoidance.
Thus for safety-critical collision avoidance \circled{3} contains as much relevant information as \circled{2}.
Therefore as long as an obstacle's height is not erroneously detected to be above and clear of the AV, we can simply consider the top or bird's eye view of the AV's surroundings.
While such a view of obstacles is not traditionally used in perception systems, however, path planning in AV, an inherently 2D problem, regularly uses this representation~\cite{ming2021survey}.
\subsection{Distance}
\label{sec:req_dist}

The distance to an obstacle must be accurately detected for the collision-free operation of the AV.
For collision avoidance, this distance is the minimum distance between the perimeters of the obstacle and the AV.
Many safety parameters like safe following distance, time to collision and time to stop, are a function of the distance between the AV and obstacles~\cite{seiler1998development, cheng2019longitudinal},
and therein lies the difference in underestimation and overestimation error in distance detection.

\subsubsection{Underestimation Error}
When obstacle distances are underestimated, \ie obstacles are detected to be closer than they are, the performance of AV suffers,
however, there is no negative impact for direct collision avoidance.
An implicit effect of acceptance of all detection with a lower distance than the ground truth obstacle is that occluded obstacles become unnecessary to detect as long as the occluding obstacle is detected.
An example for this is Figure~\ref{fig:req_examples} (a) and (b).
A small wall exists to the left side of the AV in both images, separating traffic moving in the opposite direction to the AV.
If the wall is detected as an obstacle, the AV would not need to detect the vehicles across the wall to avoid colliding with them. Avoiding collisions with the occluding obstacle (wall here) means avoiding collisions with occluded obstacles.
Note that this only applies to completely occluded obstacles. Partial occlusion is discussed in Section~\ref{sec:req_coverage}.

\subsubsection{Overestimation Error}
\label{sub:dist_error_positive}
Overestimation of distance is a grave safety concern.
Obstacles detected to be further than they are invalidate any safety decisions that would be based on this information.
Since it is unrealistic that an obstacle detection system would always have no distance overestimation error,
a strict bound on overestimation distance detection error is required.
With an established max error bound, any distance-based safety guarantee can assume this error is always present, maintaining the safety guarantee in the worst case.

\subsection{Projection}
\label{sec:req_coverage}

We determine a projection signifying a line on the road that the AV cannot cross without colliding with the obstacle for each obstacle.
Figure~\ref{fig:req_examples} (c) shows an example of this.
A circle is drawn with its center at the AV's sensor hub and radius equal to the closest point on the obstacle from AV.
The projection of the obstacle is then found as a line segment, tangent to the point where the above circle touches the obstacle.
This line segment is shown as a thick line segment in Figure~\ref{fig:req_examples} (c).
This line segment is a representation of the obstacle. Note that as new sensor inputs come in over time, the AV and obstacle move relative to each other, and the projection moves accordingly.
So the line segment only represents the obstacle in the current frame to make safety-critical decisions until the next sensor frame is received and processed.

With this minimal representation of the ground truth obstacles, containing only the minimal information about the obstacles as required for safety-critical collision avoidance, we can now determine when detected obstacles are sufficiently detected to avoid collisions.
\subsection{Coverage}

Each detected obstacle that meets the distance criteria ($\S$\ref{sec:req_dist}) and falls within the direction of an obstacle can now be used to determine if a ground truth obstacle is sufficiently covered by detected obstacles to avoid collisions with the ground truth obstacle.
Detected obstacles are projected on the projection of the ground truth obstacle to determine what parts of the ground truth projection are covered by the detected obstacles, as shown in Figure~\ref{fig:req_examples}~(d).
The projected detection must provide enough information to avoid collisions for an obstacle to be considered as detected.
As with distance ($\S$\ref{sec:req_dist}) this coverage may not be perfect but should have bounded errors.
A limited proportion of the ground truth projection must be covered. This is equivalent to the traditionally used intersection over union (IOU) limits.
It should be noted that the error margin of coverage is less important than that of the distance.
The minimum distance always tracks the distance between the closest points between the AV and the obstacle, changing with each sensor input frame.

\subsection{Summary}

In this section we have discussed various features of obstacles a perception system may detect and detailed their relevance for safety-critical collision avoidance.
Many properties of obstacles that are considered critical parts of perception in AV
(\eg class, 3D dimensions, road sign information),
while indeed required for achieving the mission of autonomous driving, are not necessary for safety-critical collision avoidance.
In brief, an obstacle is considered detected for safety-critical obstacle avoidance if the following are accurately determined
\ca the distance of obstacle from AV, within bounded error margins.
\cb a line on the road that the obstacle makes unsafe for the AV to cross.
An obstacle detection system that reliably meets the above requirements can be used to detect safety-critical faults in perception by more complex perception systems, thus providing fault detection and collision avoidance system.
The fusion of verifiable algorithms with DNN and the control decisions based on the verifiable algorithms have additional challenges that will be addressed in future works.

\section{Parameters and Constraints}
\label{sec:paramconstraints}

\begin{table}[t]
    \centering
    \caption{\label{tab:symbol_summary}
        Symbols Summary}
    \begin{tabular}{c | p{0.46\columnwidth} | p{0.22\columnwidth}}
        \toprule
        Symbols 		& Description                                      & Example Value                             \\ \midrule
        $N$        		& Count of lasers in vertical array                & $64$                                      \\
        $\xi$      		& \lidar Beam Angles set                           & $[-2.4^o, ..., 17.6^o]$                   \\
        $H_L$      		& Height of \lidar sensor                          & $2.184~m$                                 \\
        $\Uppsi$   		& \lidar horizontal angle step                     & $0.136^o$                                 \\
        $h_o$      		& Height of obstacle                               & Variable                                  \\
        $\alpha_o$ 		& Inclination of obstacle from ground              & $[0^o, ..., 8^o]$~\cite{aashto2001policy} \\
        $D_{min}$  		& Distance to first \lidar return                  & $6.7~m$                                   \\
        $D_{max}$  		& Distance range of the sensor                     & $75~m$                                    \\
        $\alpha_{th}$ 	& Threshold angle for ground removal               & $10^o$                                    \\ \bottomrule
    \end{tabular}
\end{table}

\subsection{\lidar  Parameters}
\label{sub:lidar_params}
A rotating \lidar  contains several lasers stacked vertically. Each laser is given an index $ \mathbf{r} \in 1 ... N$, where $\mathbf{N}$ is the number of lasers.
Each laser has an inclination angle $\xi_i$ from the horizontal, positive below the horizontal, the set of which is represented by $\xi$.
The lowest laser below the horizontal is assigned index $r = 1$ and $r = N$ index represents the highest laser, usually inclined above the horizontal.
The \lidar is mounted on the vehicle at a height $\mathbf{H_L}$ above the wheelbase.
For rotating \lidar, the horizontal step angle $\mathbf{\Uppsi}$ is uniform and can be calculated as $360^o/SamplesPerRotation$.
The sensor is considered the origin point for the coordinate system.
At each rotation, a new column of N samples is recorded and indexed with $\mathbf{c}$.
The sensor returns a range image, a 2D depth image of range values $\mathbf{R_{r,c}}~\forall~r\in1...N,~c\in1...\Uppsi$.
While we assume the more prolific rotating \lidar, a solid state \lidar~\cite{roriz2021automotive} would have similar parameters.
Table~\ref{tab:symbol_summary} and Figure~\ref{fig:alpha_proof}, respectively, summarize and represent some of these parameters, using example values from real world dataset~\cite{sun2020scalability}.

\subsection{Constraints}
\label{sec:constraints}
We assume the following constraints for the  validity of the Detectability model in Section~\ref{sec:model}:

\texttt{\textbf{C1}:} All \lidar beams encountering solid obstacles, including ground, within the max range of the \lidar, return with strong enough intensity from the first obstacle they encounter so that the return is recorded. This assumption holds in real world except when;
\ca there are physical impediments in the air like dust, smoke, fog or rain; or,
\cb adversarial objects with extremely reflective, absorbent or transparent surfaces.

\texttt{\textbf{C2}:} The obstacle's width, projected on a plane perpendicular to the \lidar  beams falling on the obstacle, must be enough to have \lidar  points returned from it.
For example, a thin sheet aligned parallel to the \lidar  beam's direction can never be detected. According to the formula for circle arc length, a conservative bound can be established:
\begin{align}
    \text{Width Projected} &\geq \Uppsi D \pi / 180^o.
\end{align}
Since $\Uppsi$ is small, the arc length is approximately equal, though always greater than the chord length for the same points.
Being a small value, this width is not a prohibitive constraint, \eg using values from Table~\ref{tab:symbol_summary} the width constraint at max \lidar range of $75$ $m$ is just $17$ $cm$.

\texttt{\textbf{C3}:} At least one \lidar  point exists on the ground before the obstacle. This is an algorithmic constraint~\cite{bogoslavskyi17pfg}.
While we focus on the primary Top \lidar only in rest of this work, additional low height limited field of view (FOV) \lidar sensors can be used for obstacles closer than $D_{min}$, \eg Front, Rear, Side-Left and, Side-Right \lidar in Waymo Open Dataset~\cite{sun2020scalability}.

Furthermore, the following are assumed initially ($\S$\ref{sub:model_simple}) and relaxed in later sections:

\texttt{\textbf{A1}:} Obstacle touches the ground at $90^o$, \ie  $\alpha_o = 90^o$. Relaxed in Sections~\ref{sub:model_incline} and \ref{sub:model_gap}.

\texttt{\textbf{A2}:} There is no ground inclination change between the vehicle and the obstacle. Relaxed in Section~\ref{sub:model_inclined_ground}.

\texttt{\textbf{A3}:} No noise in detected range. Relaxed in Section~\ref{sub:model_noise}.

\section{Detectability Model}
\label{sec:model}

\begin{figure}[tp]
    \centering
        \centering
        \includegraphics[width=.7\linewidth,keepaspectratio]{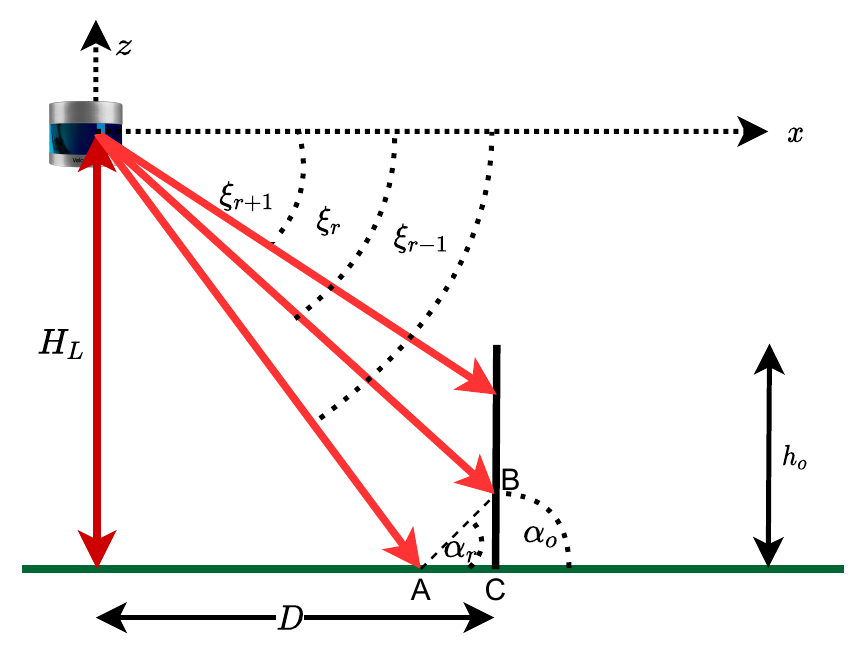}
        \caption{\label{fig:alpha_proof}
            A representation of point segmentation to ground \vs obstacle based on $\alpha$ thresholding when two points return from the obstacle.
        }
\end{figure}

A \textit{detectability model} describes, given an obstacle's properties, sensor parameters, and environment parameters, whether a given algorithm can detect the obstacle with a given set of algorithm parameters.
The detectability model allows the conversion of safety standards into algorithm and sensor parameter requirements as described with the example in Section~\ref{sec:motivation}.
We now develop the detectability model for obstacle detection using a \lidar.
We use Depth Clustering~\cite{bogoslavskyi16iros, bogoslavskyi17pfg} as the example algorithm, which uses depth discontinuity to segment \lidar points into ground \vs obstacles and then determine bounding boxes for obstacles.
Table~\ref{tab:symbol_summary} contains a summary of the symbols used in this section with example values from Waymo Open Dataset~\cite{sun2020scalability} or chosen defaults.

\begin{figure*}[tp]
    \centering
    \begin{minipage}[t]{0.32\textwidth}
        \centering
        \includegraphics[width=\linewidth,keepaspectratio]{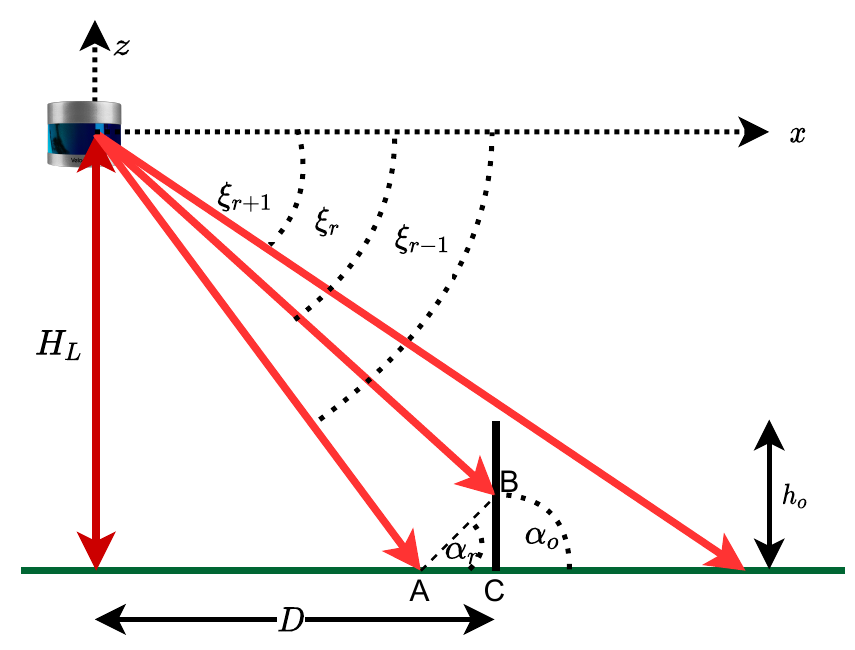}
        \caption{\label{fig:alpha_proof_one_point}
            Respresentation for $\alpha$ thresholding with only one point on obstacle.
        }
    \end{minipage}
    \hfill
    \begin{minipage}[t]{0.32\textwidth}
        \centering
        \includegraphics[width=\linewidth,keepaspectratio]{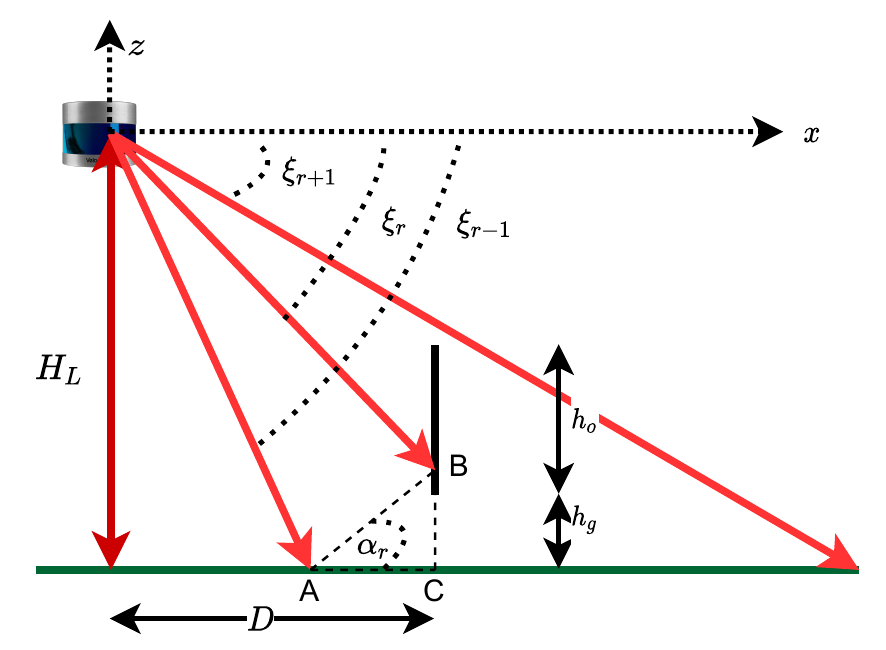}
        \caption{\label{fig:alpha_proof_gap}
            Obstacle above ground plane.
        }
    \end{minipage}
    \hfill
    \begin{minipage}[t]{0.32\textwidth}
        \centering
        \includegraphics[width=\linewidth,keepaspectratio]{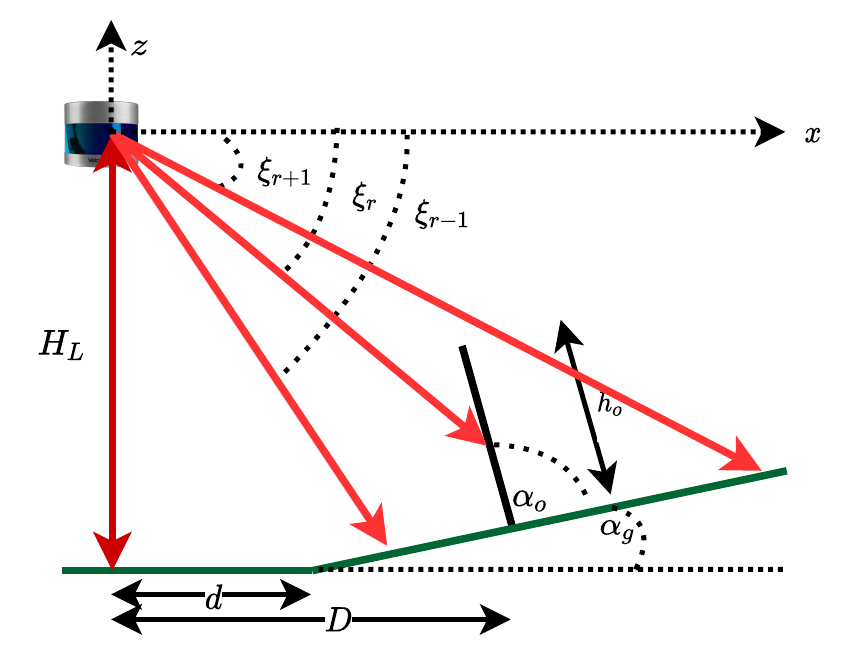}
        \caption{\label{fig:alpha_proof_ground_incline}
            Obstacle with ground inclination relative to the AV.
        }
    \end{minipage}
\end{figure*}

\subsection{Simple Model}
\label{sub:model_simple}

We start with a simple system model, as shown in Figures~\ref{fig:alpha_proof}~and~\ref{fig:alpha_proof_one_point}, with Assumptions \texttt{A1}, \texttt{A2} and \texttt{A3}.

\textit{Ground Removal:}
The primary part of the Depth Clustering algorithm is ground removal, \ie determining which range image points are on obstacles \vs ground.
FN errors can occur if points on obstacles are mistakenly considered part of the ground.
Ground removal is based on vertical depth discontinuity using the inclination angles $\alpha$.
Assuming the first point ($r=1$) to always be on the ground, a sharp change in $\alpha$ above a threshold  $\alpha_{th}$ indicates that a point is on an obstacle.
The first point must lie on the ground for the subsequent comparisons to be meaningful (\texttt{C3}).
Inclination angle is calculated from the range image, as designed by Bogoslavskyi and Stachniss~\cite{bogoslavskyi17pfg}, shown in Figure~\ref{fig:alpha_proof}, as
\begin{align}
    \alpha_{r,c} &= atan2(||BC||, ||AC||) = atan2(\Delta z, \Delta x) \label{eq:alpha} \\
    \Delta z_{r,c} &= |R_{r-1,c} sin \xi_{r-1} - R_{r,c} sin \xi_{r}| \\
    \Delta x_{r,c} &= |R_{r-1,c} cos \xi_{r-1} - R_{r,c} cos \xi_{r}| \\
    \Delta \alpha_{r,c} &= \left\{\begin{array}{lr}
        0^o,\text{if } r=1 \\%
        |\alpha_{r,c} - \alpha_{r-1,c}|,\text{otherwise}%
        \end{array}\right\} \label{eq:delta_alphs}
\end{align}

The point $r,c$ is considered to not be on the ground if, $R_{r-1,c}$ is on the ground, $\alpha_{th} < 45^o$, and
\begin{align}
    \Delta \alpha_{r,c} &>  \alpha_{th}, \label{eq:alpha_condition}
\end{align}

If $R_{r-1,c}$ is not on ground then $R_{r,c}$, is also considered to not be on ground.
The lowest row of points $R_{0,c}$ are assumed to be on the ground ($\S$\ref{sec:constraints} {\tt C3}).
The column index corresponds to a rotational position of the sensor.
For brevity, the column index is omitted for points in the same column only.

At horizontal distance $D$ from the sensor, the height of a \lidar beam $r$ can be calculated as:
\begin{equation}
    H_r(D, \xi_r, H_L) = H_L - D * tan(\xi_r).
    \label{eq:Hr1}
\end{equation}

For brevity, $H_r(D, \xi_r, H_L)$ is referred to as $H_r(D)$ from this point as $\xi_r$ and $H_L$ are constant once a \lidar sensor is chosen and placed on an AV. Let us define $r = \min \{i|H_i(D) > 0^o\}$, \ie $R_r$ is the first point on the obstacle.

With this background, we can determine the minimum height of an obstacle that can be detected when the obstacle is at a distance of $D$ from the sensor (Figure~\ref{fig:alpha_proof}). It is required to check three consecutive \lidar points to detect an obstacle, according to the ground removal algorithm presented in~\eqref{eq:alpha} through~\eqref{eq:alpha_condition}.

\begin{theorem}
\label{th:the1}
    The obstacle is detected at distance $D$, if and only if one of the following conditions are true:
    \begin{enumerate}
        \item $H_{r}(D) \leq h_o < H_{r+1}(D)$  AND \\ $\alpha_{th} < atan2(H_{r}(D), |D - \frac{H_L}{tan(\xi_{r-1})}|)$;
        \item $h_o \geq H_{r+1}(D)$.
    \end{enumerate}
\end{theorem}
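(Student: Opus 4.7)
The plan is to run a case analysis on where the obstacle's top sits relative to the two consecutive beam heights $H_r(D)$ and $H_{r+1}(D)$, then walk the ground-removal rules \eqref{eq:alpha}--\eqref{eq:alpha_condition} forward from the known ground point $R_{r-1}$ in each case. The preliminary fact I would record first is that $H_i(D)$ is monotone increasing in $i$ (higher-indexed lasers are less steeply aimed downward), so the chosen $r$ is the smallest index with a beam still above ground at horizontal distance $D$; every beam with index below $r$ therefore lands on the ground short of the obstacle, while every beam with index at least $r$ either strikes the obstacle face or flies over its top. Under \texttt{A2} the two preceding ground returns are collinear in the elevation plane, so $\alpha_{r-1} = 0$ and $\Delta\alpha_r$ reduces to $\alpha_r$ itself.

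The three sub-arguments then run as follows. If $h_o < H_r(D)$, no beam returns from the obstacle, so detection is impossible and neither condition of the theorem can hold --- this disposes of the \emph{only if} direction in the region the conditions do not cover. If $H_r(D) \leq h_o < H_{r+1}(D)$, exactly $R_r$ lies on the obstacle; the sole decisive check is at index $r$, where $\Delta z_r = H_r(D)$ and $\Delta x_r = |D - H_L/\tan(\xi_{r-1})|$ feed into \eqref{eq:alpha} to give precisely the $\mathrm{atan2}$ expression in condition (1), so \eqref{eq:alpha_condition} fires iff that expression exceeds $\alpha_{th}$. If $h_o \geq H_{r+1}(D)$, then under \texttt{A1} both $R_r$ and $R_{r+1}$ land on the vertical obstacle face at the same horizontal distance $D$, which makes $\Delta x_{r+1} = 0$ and $\alpha_{r+1} = 90^o$; a short angular-arithmetic step then shows detection is unavoidable regardless of whether the index-$r$ test fires.

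The main obstacle I anticipate is that third sub-case, specifically explaining why the explicit angle inequality from (1) evaporates in (2). The argument hinges on the $\alpha_{th} < 45^o$ guardrail baked into the algorithm: if the test at $r$ happens to fail, then $\alpha_r \leq \alpha_{th} < 45^o$, but combined with $\alpha_{r+1} = 90^o$ this forces $\Delta\alpha_{r+1} > 45^o > \alpha_{th}$, so the labeling rule fires at $r+1$ instead. I would also have to verify carefully that the propagation clause (``$R_{r-1,c}$ not on ground implies $R_{r,c}$ not on ground'') does not short-circuit the bookkeeping in the wrong direction, and that the $\alpha_{r-1}=0$ simplification invokes \texttt{A2} only as intended. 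Once those checks are lined up, the three cases paste together into the stated biconditional.
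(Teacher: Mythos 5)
Your proposal is correct and follows essentially the same route as the paper's proof: the same decomposition by how many \lidar beams land on the obstacle, the same computation of $\alpha_r$ via \eqref{eq:alpha} yielding the $atan2$ expression, and the same $\alpha_{th} < 45^o$ argument showing that $\alpha_{r+1}=90^o$ forces one of $\Delta\alpha_r,\Delta\alpha_{r+1}$ to exceed $45^o$. The only difference is organizational --- you run a single exhaustive trichotomy on $h_o$ that delivers both directions of the biconditional at once, whereas the paper separates sufficiency from necessity --- but the underlying ideas are identical.
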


\begin{proof}
\textit{Sufficiency:}
Consider case 2), \ie $h_o \geq H_{r+1}(D)$. Then the points $R_r$ and $R_{r+1}$ are on the obstacle, as shown in Figure~\ref{fig:alpha_proof}, and thus it holds that $\alpha_{r+1}=90^o$.
Since we have
\begin{align*}
    &\Delta \alpha_r = \alpha_r-0^o,&&
    \Delta \alpha_{r+1} = |90^o-\alpha_r|,
\end{align*}
where  $\alpha_{r-1}=0^o$ by the definition of index $r$, it follows that
\begin{align*}
    \max\{\Delta \alpha_r,\Delta \alpha_{r+1}\} \geq 45^o>\alpha_{th},
\end{align*}
which guarantees that the obstacle is detected at distance $D$.

Now consider case 1), \ie $H_{r}(D) \leq h_o < H_{r+1}(D)$ as shown in Figure~\ref{fig:alpha_proof_one_point}.
By the assumption, there exists an index $r-1$ that touches the ground.
Since $h_o \geq H_{r}(D)$, the angle $\alpha_{r-1}$ can be found by \eqref{eq:alpha} as follows:
\begin{align}
    \alpha_{r} &= atan2(H_{r}(D), |R_{r-1} cos (\xi_{r-1}) - R_{r} cos (\xi_{r})|)\nonumber\\
    &=atan2(H_{r}(D), |D - \frac{H_L}{tan(\xi_{r-1})}|),
    \label{eq:simple_cond}
\end{align}
where
\begin{align*}
    &R_{r-1}sin (\xi_{r-1}) = H_L,  &&R_{r} cos (\xi_{r}) = D.
\end{align*}

Since $\alpha_{r-1}=0^o$, the above equation and the angle condition in case 1) imply $\alpha_{th} < \Delta \alpha_{r}$, \ie the obstacle is detected at distance $D$.

\textit{Necessity:}
If the obstacle is detected at a distance $D$, then at least one point on the obstacle is at distance $D$, \ie $R_r$ is on the obstacle. This implies $h_o \geq H_{r}(D)$.
Furthermore, if the obstacle is detected, then
\begin{align}
    \Delta \alpha_{r} = \alpha_r-\alpha_{r-1} = \alpha_r > \alpha_{th}.
    \label{eq:alpha_th_cond}
\end{align}
There are two cases: $\alpha_{r+1}=90^o$ and $\alpha_{r+1} \neq 90^o$. If $\alpha_{r+1}=90^o$, then
$R_{r+1}$ is on the obstacle, and thus we have $h_o \geq H_{r+1}(D)$. This implies the case 2).
On the other hand, if $\alpha_{r+1} \neq 90^o$, then
the inequality in~\eqref{eq:alpha_th_cond} must hold, which renders the condition $\alpha_{th} \leq atan2(H_{r}(D), |D - \frac{H_L}{tan(\xi_{r-1})}|)$.
This implies case 1). This completes the proof.
\end{proof}

Theorem~\ref{th:the1} consists of two sets of conditions based on the obstacle's height. The second set indicates that if the height of the obstacle is sufficiently large, then there will be more than two \lidar points on the obstacle ($\alpha_{r+1}=90^o$). This allows us to detect the obstacle without further condition on the angle $\Delta \alpha_r$.
The first set is the case that, when the size of the obstacle is small, there is only one \lidar point on the obstacle, which requires us to have an additional condition for the angle $\Delta \alpha_r$.

The first condition set in Theorem~\ref{th:the1} implies that the minimum height to be detected at distance $D$ satisfies
\begin{align}
   &H_{r}(D) = h_o \nonumber\\
   &\alpha_{th} \leq atan2(h_o, |D - \frac{H_L}{tan(\xi_{r-1})}|)
   \label{eq:minimum_height},
\end{align}
which depends on the distance $D$ and threshold $\alpha_{th}$.

\subsection{Obstacle at Inclination}
\label{sub:model_incline}
\noindent
\textit{Assumptions:}
We maintain all assumptions from $\S$\ref{sub:model_simple} except part of \texttt{A1}, \ie that the obstacle surface inclination angle is $\alpha_o \neq 90^o$. We assume $\alpha_o > \alpha_{th}$, otherwise the obstacle cannot be detected.

In this case, if $R_{i}$ is at the end of the inclined obstacle, then its height at a distance $D$ is found by
\begin{align*}
    H_{i}(D) = h_o sin (\alpha_o) + h_o cos (\alpha_o) tan (\xi_{i}).
\end{align*}
This height can be used to determine whether there are more than two \lidar points on the obstacle. Further notice that the angle $\alpha_r$ is found by reduced height $\frac{H_r(D)}{tan(\xi_r)/tan(\alpha_o)+1}$ and increased width $D+\frac{H_r(D)}{tan(\xi_r)+tan(\alpha_o)}-\frac{H_L}{tan(\xi_{r-1})}$. This observations induce the following Corollary from Theorem~\ref{th:the1}.
\begin{corollary}
    The obstacle is detected at distance $D$, if and only if one of the following conditions are true:
    \begin{enumerate}
        \item $H_{r}(D) \leq h_o sin (\alpha_o) + h_o cos (\alpha_o) tan (\xi_{r+1}) < H_{r+1}(D)$ AND $\alpha_{th} < atan2(\frac{H_r(D)}{tan(\xi_r)/tan(\alpha_o)+1}, |D+\frac{H_r(D)}{tan(\xi_r)+tan(\alpha_o)}-\frac{H_L}{tan(\xi_{r-1})}|)$;
        \item $h_o sin (\alpha_o) + h_o cos (\alpha_o) tan (\xi_{r+1}) \geq H_{r+1}(D)$.
    \end{enumerate}
\label{cor1}
\end{corollary}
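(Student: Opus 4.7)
My plan is to establish the corollary by reducing it to Theorem~\ref{th:the1} via a careful recomputation of where the LiDAR beams strike the now-tilted obstacle surface (as depicted in Figure~\ref{fig:alpha_proof_ground_incline}). The inclined obstacle of surface length $h_o$ occupies the segment from the ground contact $(D,0)$ to its apex $(D+h_o\cos(\alpha_o),\, h_o\sin(\alpha_o))$, so each geometric quantity used in Theorem~\ref{th:the1} must be re-expressed in terms of this tilted segment rather than a vertical wall at $x=D$.

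First, I would derive the condition that beam $r+1$ passes above the apex. At horizontal position $D+h_o\cos(\alpha_o)$ the beam sits at height $H_{r+1}(D)-h_o\cos(\alpha_o)\tan(\xi_{r+1})$; requiring the apex height $h_o\sin(\alpha_o)$ to lie strictly below this value rearranges to $h_o\sin(\alpha_o)+h_o\cos(\alpha_o)\tan(\xi_{r+1})<H_{r+1}(D)$, which is exactly the upper bound of case~1 and, negated, case~2 of Corollary~\ref{cor1}. In case~2 both beams $r$ and $r+1$ strike the tilted surface, so $\alpha_{r+1}=\alpha_o$ rather than $90^\circ$, and the sufficiency argument of Theorem~\ref{th:the1} carries over with the numerical threshold $45^\circ$ replaced by the standing assumption $\alpha_o>\alpha_{th}$ stated at the start of the subsection.

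Next, for case~1 where only beam $r$ lies on the obstacle, I would solve the intersection of the beam line $y=H_L-x\tan(\xi_r)$ with the surface line $y=(x-D)\tan(\alpha_o)$. A direct calculation produces the contact coordinates $\bigl(D+\tfrac{H_r(D)}{\tan(\xi_r)+\tan(\alpha_o)},\ \tfrac{H_r(D)}{\tan(\xi_r)/\tan(\alpha_o)+1}\bigr)$. Substituting these into the atan2 derivation of~\eqref{eq:simple_cond} replaces the numerator $H_r(D)$ and the horizontal difference $|D-H_L/\tan(\xi_{r-1})|$ by exactly the expressions appearing inside the atan2 of Corollary~\ref{cor1}. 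With these substitutions, the sufficiency and necessity arguments of Theorem~\ref{th:the1} transfer essentially verbatim.

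The main obstacle I anticipate is bookkeeping around sign and index conventions: $\xi_i$ is measured positive below horizontal whereas $\alpha_o$ is positive above, so the two slopes add rather than cancel in the denominator $\tan(\xi_r)+\tan(\alpha_o)$, and the beam index $r$ must be re-identified as the first beam whose trajectory strikes the inclined surface rather than a vertical one. A secondary concern is verifying that the triangle used to compute $\alpha_r$ still has its base on the ground contact of beam $r-1$; because the obstacle is assumed to start exactly at $(D,0)$ this remains true, and the algebraic substitution of effective height and horizontal distance then reduces the entire statement to an instance of Theorem~\ref{th:the1}.
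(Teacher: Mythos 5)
Your proposal matches the paper's own derivation: the paper obtains the corollary from Theorem~\ref{th:the1} by exactly the two computations you describe --- the tip-height threshold $h_o \sin(\alpha_o)+h_o \cos(\alpha_o)\tan(\xi_{i})$ for deciding whether beam $i$ lies on the obstacle, and the beam--surface intersection giving the reduced height $\frac{H_r(D)}{\tan(\xi_r)/\tan(\alpha_o)+1}$ and increased width $D+\frac{H_r(D)}{\tan(\xi_r)+\tan(\alpha_o)}-\frac{H_L}{\tan(\xi_{r-1})}$ that replace the arguments of $atan2$ in~\eqref{eq:simple_cond}. Your remark that in case~2 one has $\alpha_{r+1}=\alpha_o$ rather than $90^\circ$, so the $45^\circ$ bound must be replaced by an appeal to the standing assumption $\alpha_o>\alpha_{th}$, is actually slightly more explicit than the paper, which simply asserts that these observations induce the corollary.
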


\subsection{Obstacle not Touching Ground}
\label{sub:model_gap}

\noindent
\textit{Assumptions:}
We maintain all assumptions in $\S$\ref{sub:model_simple} except \texttt{A1}, \ie that the obstacle surface no longer touches the ground. Instead, now the obstacle surface starts at height $h_g$ above the ground as shown in Figure~\ref{fig:alpha_proof_gap}.

We are interested in the first beam on the obstacle, where its index is defined by $r_g = \min \{i|H_i(D) > h_g\}$. Noticing the height of the obstacle tip is $h_o+h_g$, we can reformulate Theorem~\ref{th:the1} as the following corollary.
\begin{corollary}
    The obstacle is detected at distance $D$, if and only if one of the following conditions are true:
    \begin{enumerate}
        \item $H_{r_g}(D) \leq h_o+h_g < H_{r_g+1}(D)$ AND $\alpha_{th} < atan2(H_{r_g}(D), |D - \frac{H_L}{tan(\xi_{r_g-1})}|)$;
        \item $h_o+h_g \geq H_{r_g+1}(D, \xi, H_L)$.
    \end{enumerate}
\label{cor2}
\end{corollary}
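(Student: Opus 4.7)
The plan is to reduce Corollary~\ref{cor2} to Theorem~\ref{th:the1} by a change of reference height, exploiting the fact that the geometry at the leading edge of the floating obstacle is structurally identical to the ground-touching case, only shifted upward by $h_g$.

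First I would verify that the beams with index $i < r_g$ behave as ground points. By definition of $r_g$, every such beam has $H_i(D) \leq h_g$, so it passes strictly below the bottom edge of the obstacle surface and, under constraint \texttt{C1} together with assumption \texttt{A2}, terminates on the ground at horizontal distance $\frac{H_L}{\tan(\xi_i)}$. In particular, beam $r_g - 1$ is a ground return, which plays exactly the role that $r-1$ played in the proof of Theorem~\ref{th:the1}: it provides the anchor for $\alpha_{r_g - 1} = 0^{\circ}$ and thus makes $\Delta\alpha_{r_g} = \alpha_{r_g}$. This is the one step that requires care, and it will be the main obstacle of the argument, because one must rule out the possibility that a beam below $r_g$ clips the underside of the obstacle; but since \texttt{A1} is relaxed only to introduce a vertical gap (the surface remains vertical, $\alpha_o = 90^{\circ}$), a beam whose height at distance $D$ is below $h_g$ cannot strike the surface and proceeds onward to the ground.

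Next I would translate the height bookkeeping. The tip of the obstacle now lies at height $h_o + h_g$ above the ground, so the condition for beam $r_g + 1$ to land on the obstacle becomes $h_o + h_g \geq H_{r_g+1}(D)$, and the condition for only beam $r_g$ to land on the obstacle (with $r_g + 1$ passing over) becomes $H_{r_g}(D) \leq h_o + h_g < H_{r_g+1}(D)$. In the two-point case, $\alpha_{r_g+1} = 90^{\circ}$ and the same bound $\max\{\Delta\alpha_{r_g}, \Delta\alpha_{r_g+1}\} \geq 45^{\circ} > \alpha_{th}$ used in Theorem~\ref{th:the1} immediately forces detection.

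Finally, for the single-point case I would recompute $\alpha_{r_g}$ using~\eqref{eq:alpha}. Because beam $r_g - 1$ still lands on the ground at horizontal coordinate $\frac{H_L}{\tan(\xi_{r_g-1})}$ and beam $r_g$ lands on the vertical obstacle face at horizontal coordinate $D$ and vertical coordinate $H_{r_g}(D)$, the same derivation that produced~\eqref{eq:simple_cond} yields
\begin{align*}
\alpha_{r_g} = atan2\bigl(H_{r_g}(D),\; |D - \tfrac{H_L}{\tan(\xi_{r_g-1})}|\bigr).
\end{align*}
The detection condition $\Delta\alpha_{r_g} > \alpha_{th}$ then gives exactly the angle inequality stated in case~1 of the corollary. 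Necessity follows by the same dichotomy on $\alpha_{r_g+1}$ as in Theorem~\ref{th:the1}. Thus the proof is essentially a careful substitution $r \mapsto r_g$, $h_o \mapsto h_o + h_g$ into the proof of Theorem~\ref{th:the1}, justified by showing that the ground anchor below the obstacle is preserved.
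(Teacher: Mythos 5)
Your proposal is correct and follows essentially the same route as the paper, which justifies Corollary~\ref{cor2} in a single sentence by redefining the first on-obstacle beam as $r_g = \min\{i \mid H_i(D) > h_g\}$, noting the tip height is $h_o + h_g$, and reusing Theorem~\ref{th:the1}. Your added care in checking that beams below $r_g$ pass through the gap and still anchor $\alpha_{r_g-1} = 0^{\circ}$ is exactly the detail the paper leaves implicit, and it is argued correctly.
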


The minimum detectable height for this case is found by
\begin{align*}
   &H_{r_g}(D) = h_o + h_g \nonumber\\
   &\alpha_{th} \leq atan2(h_o + h_g, |D - \frac{H_L}{tan(\xi_{r_g-1})}|).
\end{align*}

\newcommand{\maxfitplotsize}{.329}
\begin{figure*}[tp]
    \centering
    \subfloat[\centering \label{fig:hmin_simple} Simple Model ($\S$\ref{sub:model_simple})]{{\includegraphics[width=\maxfitplotsize\linewidth,keepaspectratio]{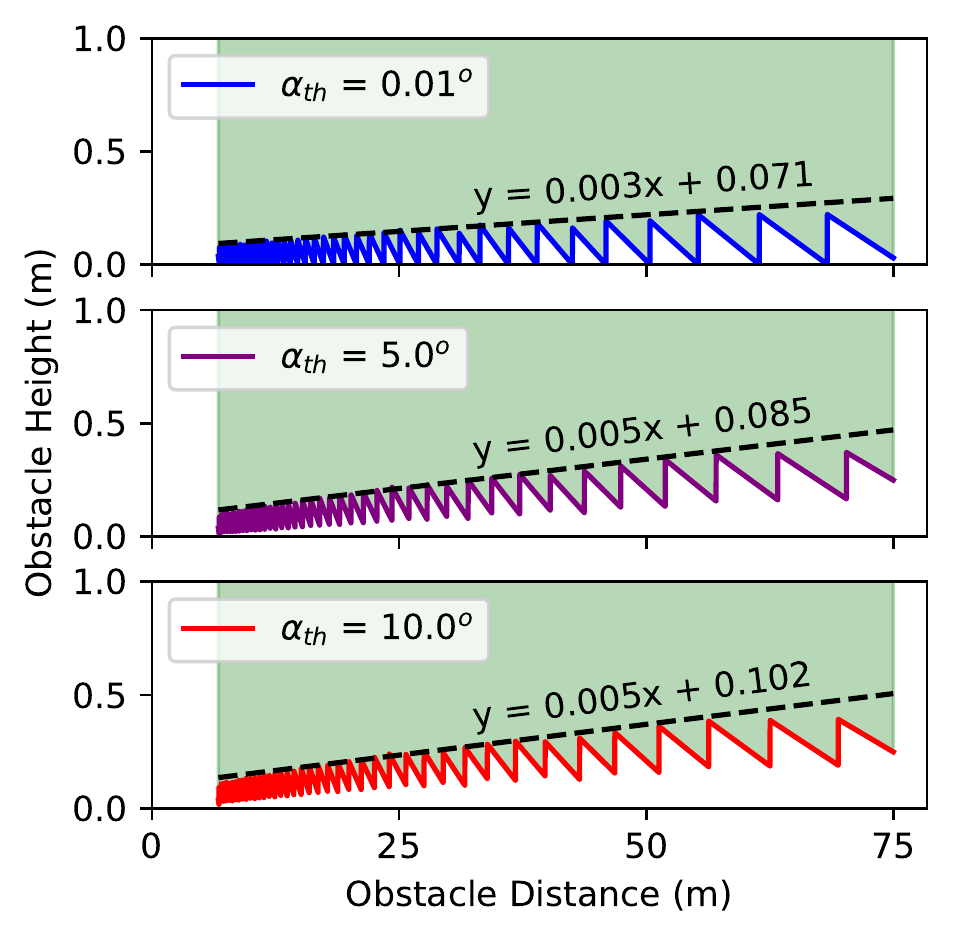}}}%
    \hfill
    \subfloat[\centering \label{fig:hmin_incline} $\alpha_o = 60^o$ ($\S$\ref{sub:model_incline})]{{\includegraphics[width=\maxfitplotsize\linewidth,keepaspectratio]{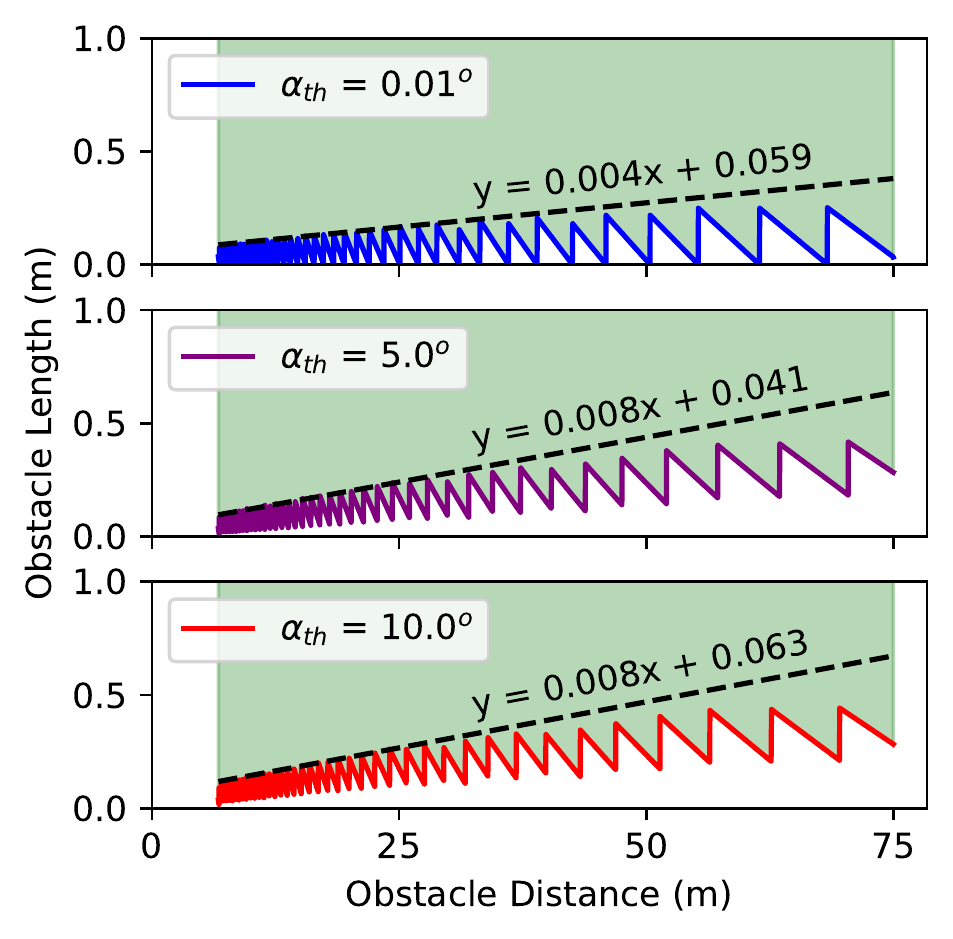}}}%
    \hfill
    \subfloat[\centering \label{fig:hmin_gap} $h_g = 0.271$ ($\S$\ref{sub:model_gap})]{{\includegraphics[width=\maxfitplotsize\linewidth,keepaspectratio]{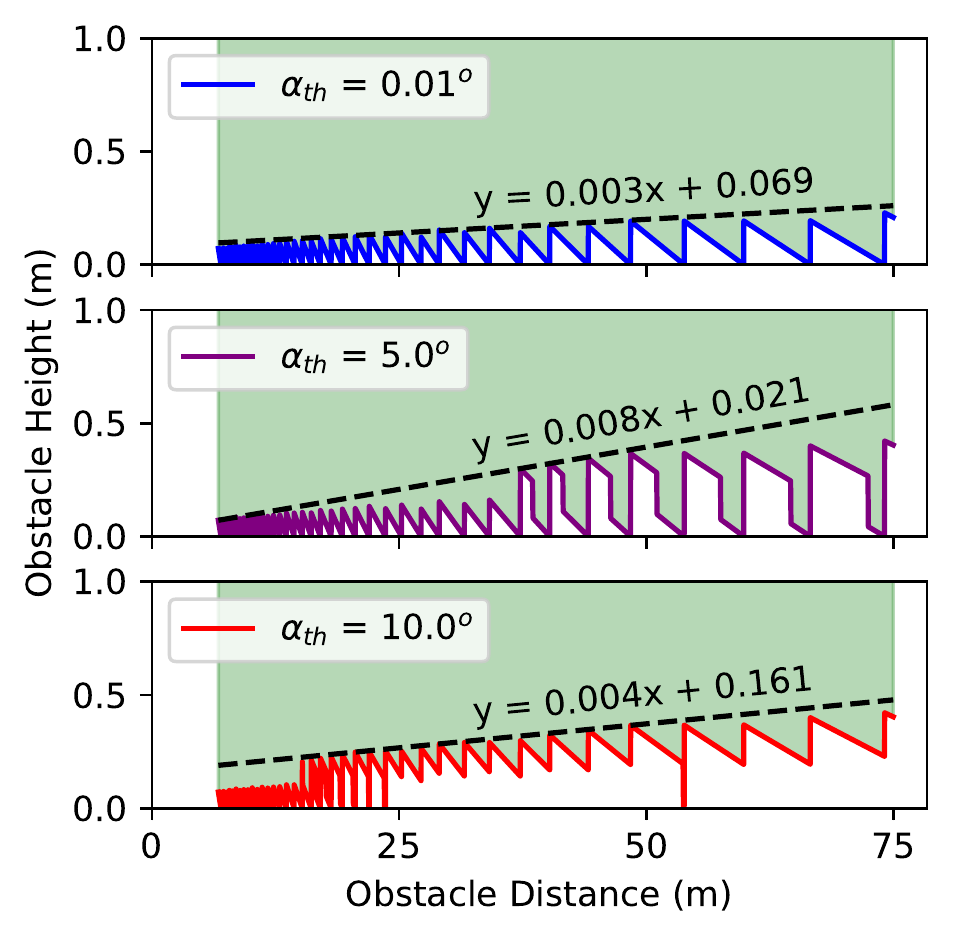}}}%
    \caption{
        Minimum detectable obstacle height (Y-Axis) at varying distances from the vehicle (X-Axis).
        Black dashed lines are a linear max fit for the plots, with corresponding equations.
        Shaded region signifies where the obstacle is always detected.}%
    \label{fig:hmin}%
\end{figure*}

\subsection{Inclined Ground}
\label{sub:model_inclined_ground}

\textit{Assumptions:}
We maintain all assumptions in $\S$\ref{sub:model_simple} except \texttt{A2}, \ie Ground inclination $\alpha_g$ could be non-zero (Figure~\ref{fig:alpha_proof_ground_incline}).

Ground inclination affects the detectability model only when relative inclination changes between the obstacle and the AV.
This section assumes that ground inclination starts before the obstacle position, \ie $d < D$.
There are two cases.

Case I: The inclination starts after the beam $R_{r-1}$, \ie
\begin{align}
	\frac{H_L}{tan(\xi_{r-1}) } \leq d \leq \frac{H_L}{tan(\xi_{r}) }.
\end{align}

We can extend Corollary~\ref{cor1} to find the detectability condition. The first beam touching the obstacle must be higher than the ground at distance $D$. Let us define
\begin{align*}
    r_f = \min \{i|H_i(D) > (D-d) tan (\alpha_g)\}.
\end{align*}

If $R_{r_f+1}$ is at the end of the obstacle, then
\begin{align*}
    H_{r_f+1}(D) &= h_o cos (\alpha_g) - h_o sin (\alpha_g) tan (\xi_{r_f+1})\nonumber\\
    &+ (D-d) tan (\alpha_g),
\end{align*}
which is the threshold to determine whether $R_{r_f+1}$ is on the obstacle.
Furthermore, $\alpha_{r_f}$ is found by increased height
$\frac{H_{r_f}(D)-(D-d) tan (\alpha_g)}{1-tan(\xi_{r_f})tan(\alpha_g)}+(D-d) tan (\alpha_g)$
and decreased width
$D+\frac{H_{r_f}(D)-(D-d) tan (\alpha_g)}{tan(\xi_{r_f})-cot(\alpha_g)}-\frac{H_L}{tan(\xi_{r_f-1})}$.
This observation induces the following Corollary.
\begin{corollary}
    The obstacle is detected at distance $D$, if and only if one of the following conditions are true:
    \begin{enumerate}
        \item $H_{r_f}(D) \leq
        h_o cos (\alpha_g) - h_o sin (\alpha_g) tan (\xi_{r_f+1})+ (D-d)tan (\alpha_g)
        < H_{r_f+1}(D)$ AND $\alpha_{th} < atan2(\frac{H_{r_f}(D)-(D-d) tan (\alpha_g)}{1-tan(\xi_{r_f})tan(\alpha_g)}+(D-d) tan (\alpha_g), |D+\frac{H_{r_f}(D)-(D-d) tan (\alpha_g)}{tan(\xi_{r_f})-cot(\alpha_g)}-\frac{H_L}{tan(\xi_{r_f-1})}|)$;
        \item $h_o cos (\alpha_g) - h_o sin (\alpha_g) tan (\xi_{r_f+1})+ (D-d)tan (\alpha_g) \geq H_{r_f+1}(D)$.
    \end{enumerate}
\label{cor3}

\end{corollary}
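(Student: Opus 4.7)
The plan is to mirror the proof of Theorem~\ref{th:the1}, replacing every height/width that appeared in the flat-ground geometry by the corresponding quantity measured against the inclined ground plane, and then appeal to the same angle-threshold test on $\Delta\alpha$ that the Depth Clustering algorithm uses. First, I would nail down the geometric primitives in the inclined-ground frame. In Case~I, the beam $R_{r_f-1}$ still strikes the flat portion of the ground (because the inclination begins after $d \geq H_L/\tan(\xi_{r_f-1})$), so $\alpha_{r_f-1}=0^o$ by the same argument as in Theorem~\ref{th:the1}. Beam $R_{r_f}$ is the first one whose height at horizontal distance $D$ clears the tilted ground, by the very definition of $r_f$, so $R_{r_f}$ lies on the obstacle. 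Next, I would compute the height at the tip of the obstacle along the inclined base: since the obstacle base is lifted by $(D-d)\tan(\alpha_g)$ at position $D$ and the obstacle itself is tilted with the ground, the tip sits at $h_o\cos(\alpha_g) - h_o\sin(\alpha_g)\tan(\xi_{r_f+1}) + (D-d)\tan(\alpha_g)$ when projected to the vertical line where $R_{r_f+1}$ lands. This is exactly the threshold that decides whether $R_{r_f+1}$ also lies on the obstacle.

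With those two ingredients in place, the sufficiency argument splits in the same way as Theorem~\ref{th:the1}. In case~(2), two consecutive beams $R_{r_f}, R_{r_f+1}$ sit on the obstacle face, so the local inclination $\alpha_{r_f+1}$ equals $90^o - \alpha_g$ (measured against the inclined obstacle) which still forces $\max\{\Delta\alpha_{r_f},\Delta\alpha_{r_f+1}\}>\alpha_{th}$ as long as $\alpha_{th}<45^o$. In case~(1), only $R_{r_f}$ is on the obstacle; I would compute $\alpha_{r_f}$ using the $\mathrm{atan2}$ formula~\eqref{eq:alpha}, substituting the effective rise between $R_{r_f-1}$ and $R_{r_f}$, namely $\frac{H_{r_f}(D)-(D-d)\tan(\alpha_g)}{1-\tan(\xi_{r_f})\tan(\alpha_g)} + (D-d)\tan(\alpha_g)$, over the effective horizontal run $|D + \frac{H_{r_f}(D)-(D-d)\tan(\alpha_g)}{\tan(\xi_{r_f})-\cot(\alpha_g)} - \frac{H_L}{\tan(\xi_{r_f-1})}|$. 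The stated angle inequality then gives $\Delta\alpha_{r_f}>\alpha_{th}$ directly, so detection follows.

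For necessity I would argue contrapositively, just as in Theorem~\ref{th:the1}. Detection requires at least one beam on the obstacle, which gives $h_o\cos\alpha_g - h_o\sin\alpha_g\tan(\xi_{r_f}) + (D-d)\tan\alpha_g \geq H_{r_f}(D)$, i.e.\ the lower bound in case~(1). If additionally $R_{r_f+1}$ is on the obstacle we land in case~(2); otherwise $\alpha_{r_f+1}\neq 90^o-\alpha_g$ and the only way the algorithm flags a non-ground point is via $\Delta\alpha_{r_f}>\alpha_{th}$, which unrolls to the stated inequality, putting us in case~(1).

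The real obstacle is bookkeeping rather than conceptual: making sure the "reduced height" and "increased/decreased width" formulae behave correctly when $\alpha_g$ is small (the denominators $1-\tan(\xi_{r_f})\tan(\alpha_g)$ and $\tan(\xi_{r_f})-\cot(\alpha_g)$ can approach zero or flip sign) and verifying that the sub-case boundary $d \in [H_L/\tan(\xi_{r-1}),\, H_L/\tan(\xi_r)]$ is preserved under the substitution $r \leadsto r_f$. I would therefore include a brief sanity check that setting $\alpha_g=0$ recovers Theorem~\ref{th:the1} exactly, both as a correctness confirmation and as a guide to signs in the projections.
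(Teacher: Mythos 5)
Your proposal matches the paper's own (much terser) justification: the paper offers no formal proof of Corollary~\ref{cor3}, only the observations that the obstacle tip height becomes $h_o \cos(\alpha_g) - h_o \sin(\alpha_g)\tan(\xi_{r_f+1}) + (D-d)\tan(\alpha_g)$ and that $\alpha_{r_f}$ is recomputed from the modified rise and run, after which Theorem~\ref{th:the1}'s sufficiency/necessity dichotomy is invoked implicitly --- exactly the reduction you carry out. Your added caveats (the $\alpha_g = 0$ sanity check, the sign/denominator issues, and the fact that two beams on the tilted face give $90^o - \alpha_g$ rather than $90^o$, so the guaranteed margin is $(90^o-\alpha_g)/2$ rather than $45^o$) go beyond what the paper states and are worth keeping.
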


Case II:
The inclination starts before the beam $R_{r-1}$, \ie
\begin{align*}
	d < \frac{H_L}{tan(\xi_{r-1}) }.
\end{align*}

In this case, the beam point of $R_{r_f-1}$ land on different point from that of the case II. This decreases relative height $H_L-R_{r_f-1} sin (\xi_{r_f-1})$ and increases relative width $\frac{H_L}{tan (\xi_{r_f-1})}-R_{r_f-1} cos (\xi_{r_f-1})$ between $R_{r_f-1}$ and $R_{r_f}$.
This changes $\alpha_{r_f}$ found in Corollary~\ref{cor3}.
It is also worth to notice that if
\begin{align*}
	d < \frac{H_L}{tan(\xi_{r-2}) },
\end{align*}
then $\alpha_{r_f-1} = \alpha_g$, and thus we have $\Delta \alpha_r = \alpha_r - \alpha_g$.
Using these facts, one can find an extension of Corollary~\ref{cor3}.
\subsection{Noise}
\label{sub:model_noise}

\textit{Assumptions:}
We maintain all assumptions in $\S$\ref{sub:model_simple} except \texttt{A3}, \ie there exists a depth detection error.
Minor inaccuracies in depth detection can be a problem when finding $\Delta x, \Delta z$.
In particular, the $i^{th}$ beam returns noisy measurement $y_i = R_i \epsilon_i$~\cite{wang2019pseudo} instead of its ground truth distance $R_i$ from the sensor to the obstacle where $\epsilon_i \in [1-\epsilon,1+\epsilon]$ is unknown noise with a known sensor error bound $\epsilon$, \eg Velodyne HDL-64E S3 \lidar  has a range detection inaccuracy of $\pm2$ $cm$~\cite{velodynehdl64}.

Under mild assumptions, Theorems~\ref{th:the2} and~\ref{th:the3} provide a bound of the first angle $\alpha_r$ on the obstacle, and a bound of the angle $\alpha_{i}$ between two consecutive points on the obstacle.

\begin{theorem}
Assume
$y_{r}cos(\xi_r) \geq y_{r-1}cos(\xi_{r-1})$ and $y_{r}sin(\xi_r) \leq y_{r-1}sin(\xi_{r-1})$. Then, the angle $\alpha_r$ is lower bounded by
\begin{alignat*}{2}
\alpha_r &\geq atan2(&&R_{r-1}(1-\epsilon) sin(\xi_{r-1})-R_r(1+\epsilon) sin(\xi_r),\nonumber\\
    & && R_r(1+\epsilon) cos(\xi_r)-R_{r-1}(1-\epsilon) cos(\xi_{r-1}))
\end{alignat*}
and upper bounded by
\begin{alignat*}{2}
\alpha_r &\leq atan2(&&R_{r-1}(1+\epsilon) sin(\xi_{r-1})-R_r(1-\epsilon) sin(\xi_r),\nonumber\\
    & && R_r(1-\epsilon) cos(\xi_r)-R_{r-1}(1+\epsilon) cos(\xi_{r-1})).
\end{alignat*}
\label{th:the2}
\end{theorem}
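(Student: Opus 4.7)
The plan is to reduce Theorem~\ref{th:the2} to a straightforward monotonicity argument on $\mathrm{atan2}$ once the absolute values in the definition of $\alpha_r$ are eliminated. First, observe that the algorithm actually computes the angle from the noisy returns, so I would replace $R_i$ by $y_i = R_i\epsilon_i$ in the defining formulas~\eqref{eq:alpha}--\eqref{eq:delta_alphs} to obtain the measured angle
\begin{align*}
\alpha_r \;=\; \mathrm{atan2}\!\left(\,\bigl|y_{r-1}\sin\xi_{r-1}-y_r\sin\xi_r\bigr|,\;\bigl|y_{r-1}\cos\xi_{r-1}-y_r\cos\xi_r\bigr|\,\right).
\end{align*}
The two assumptions in the theorem, $y_r\cos\xi_r\geq y_{r-1}\cos\xi_{r-1}$ and $y_r\sin\xi_r\leq y_{r-1}\sin\xi_{r-1}$, are precisely what I need to drop both absolute values, giving $\alpha_r=\mathrm{atan2}(y_{r-1}\sin\xi_{r-1}-y_r\sin\xi_r,\;y_r\cos\xi_r-y_{r-1}\cos\xi_{r-1})$ with both arguments nonnegative.

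Next I would invoke the monotonicity of $\mathrm{atan2}(u,v)$ on the first quadrant: it is nondecreasing in $u$ and nonincreasing in $v$ (for $v>0$). Substituting $y_i=R_i\epsilon_i$ with $\epsilon_i\in[1-\epsilon,1+\epsilon]$, the numerator $R_{r-1}\epsilon_{r-1}\sin\xi_{r-1}-R_r\epsilon_r\sin\xi_r$ is minimized by taking $\epsilon_{r-1}=1-\epsilon$ and $\epsilon_r=1+\epsilon$ (since $\sin\xi_{r-1},\sin\xi_r\geq 0$ follows from the sign of the assumed inequality together with $R_i>0$), and the denominator $R_r\epsilon_r\cos\xi_r-R_{r-1}\epsilon_{r-1}\cos\xi_{r-1}$ is simultaneously maximized by the \emph{same} choice. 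By monotonicity this jointly minimizes $\alpha_r$, yielding exactly the stated lower bound. The upper bound follows symmetrically by flipping every $\epsilon_i$ to the opposite endpoint: $\epsilon_{r-1}=1+\epsilon$, $\epsilon_r=1-\epsilon$ simultaneously maximize the numerator and minimize the denominator, and hence maximize $\alpha_r$.

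The main obstacle I anticipate is a bookkeeping subtlety rather than a genuine difficulty: I need to verify that, under the assumptions, the numerator and denominator remain nonnegative across the entire noise interval so that the expression truly stays in the first quadrant and $\mathrm{atan2}$ is monotone. I would argue this by noting that the assumed inequalities on $y_i$ are statements about a specific realization but, combined with the fact that the noise multiplies $R_i$ by a bounded positive factor and that $\sin\xi_{r-1},\cos\xi_r$ dominate their counterparts under the physical geometry of a beam on the ground followed by a beam on an obstacle, the extremal $\epsilon$ choices preserve the sign. If sign preservation fails at some extremum, the bound only becomes looser (the $\mathrm{atan2}$ value hits $0$ or $\pi/2$), which is still consistent with the stated inequality. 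With monotonicity and sign established, both bounds drop out immediately, completing the proof.
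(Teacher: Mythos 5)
Your proposal is correct and follows essentially the same route as the paper's proof: use the two assumptions to drop the absolute values in \eqref{eq:alpha}, then exploit monotonicity of $\mathrm{atan2}$ (the paper phrases it as monotonicity of $\tan$ on $[0^\circ,90^\circ)$) with the extremal choices $\epsilon_i\in\{1-\epsilon,1+\epsilon\}$ to obtain both bounds. Your explicit attention to sign preservation of the numerator and denominator under the extremal noise values is a detail the paper's terse proof leaves implicit, but it does not change the argument.
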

\begin{proof}
Angle $\alpha_r$ is found by \eqref{eq:alpha}. The assumptions $y_{r}cos(\xi_r) \geq y_{r-1}cos(\xi_{r-1})$ and $y_{r}sin(\xi_r) \leq y_{r-1}sin(\xi_{r-1})$ imply that
\begin{alignat*}{2}
\alpha_r &= atan2(&&|y_r sin(\xi_r)-y_{r-1} sin(\xi_{r-1})|,\nonumber\\
    & && |y_r cos(\xi_r)-y_{r-1} cos(\xi_{r-1})|)\nonumber\\
    &= atan2(&&y_{r-1} sin(\xi_{r-1})-y_r sin(\xi_r),\nonumber\\
    & && y_r cos(\xi_r)-y_{r-1} cos(\xi_{r-1})).
\end{alignat*}
Considering the fact that $tan$ is a strictly increasing function in the domain $[0^o,90^o)$, we can find the lower bound and upper bounds presented in the theorem statement.
\end{proof}

\begin{theorem}
Assume that two consecutive points $R_{i-1}$ and $R_{i}$ land on the obstacle, and that
$y_{i}sin(\xi_i) \leq y_{i-1}sin(\xi_{i-1})$. Then, the angle $\alpha_i$ is lower bounded by
\begin{alignat*}{2}
\alpha_i &\geq atan2(&&R_i(1+\epsilon) sin(\xi_i)-R_{i-1}(1-\epsilon) sin(\xi_{i-1}),\nonumber\\
    & && R_i(1+\epsilon) cos(\xi_i)-R_{i-1}(1-\epsilon) cos(\xi_{i-1}))
\end{alignat*}
and upper bounded by $90^o$.
\label{th:the3}
\end{theorem}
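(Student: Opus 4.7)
The plan is to adapt the argument already used for Theorem~\ref{th:the2}. Starting from the defining identity $\alpha_i = atan2(|y_i sin(\xi_i) - y_{i-1} sin(\xi_{i-1})|, |y_i cos(\xi_i) - y_{i-1} cos(\xi_{i-1})|)$ implied by \eqref{eq:alpha}, I would invoke the hypothesis $y_i sin(\xi_i) \leq y_{i-1} sin(\xi_{i-1})$ to strip the absolute value from the vertical component, leaving $\Delta z = y_{i-1} sin(\xi_{i-1}) - y_i sin(\xi_i) \geq 0$. The noise model $y_j = R_j \epsilon_j$ with $\epsilon_j \in [1-\epsilon, 1+\epsilon]$ is then substituted in, and I would use the monotonicity of $atan2$ restricted to the first quadrant: for a fixed positive denominator it is strictly increasing in the numerator, and for a fixed nonnegative numerator it is strictly decreasing in a positive denominator.

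For the lower bound, the aim is to minimize $\Delta z$ and simultaneously maximize the denominator $|\Delta x|$ over admissible noise realizations. Minimizing $\Delta z$ calls for $y_{i-1}$ as small as possible and $y_i$ as large as possible, namely $y_{i-1} = R_{i-1}(1-\epsilon)$ and $y_i = R_i(1+\epsilon)$. These are exactly the values appearing in the horizontal term of the theorem statement, so the same extreme realization also determines the denominator $R_i(1+\epsilon) cos(\xi_i) - R_{i-1}(1-\epsilon) cos(\xi_{i-1})$, and the claimed $atan2$ lower bound drops out directly.

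The upper bound is essentially free: $\alpha_i$ is by construction an $atan2$ of two nonnegative quantities, so it always lies in $[0^o, 90^o]$, and the value $90^o$ is realizable because two consecutive beams striking a near-vertical obstacle can produce $\Delta x = 0$ once measurement noise is accounted for. Thus $90^o$ is the sharp upper bound and no tighter noise-dependent improvement is available.

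The main difficulty, compared with Theorem~\ref{th:the2}, is that no companion hypothesis fixes the sign of $\Delta x = y_i cos(\xi_i) - y_{i-1} cos(\xi_{i-1})$. When both beams land on the obstacle at essentially the same standoff distance, $y_i cos(\xi_i)$ and $y_{i-1} cos(\xi_{i-1})$ are nearly equal and can flip order under noise. The delicate step is therefore to verify that the same extreme noise realization used to shrink $\Delta z$ is compatible with the sign of $\Delta x$ embedded in the stated bound, so the two halves of the monotonicity argument apply to one coherent configuration rather than to two incompatible worst cases. This same phenomenon is precisely why the upper bound cannot be sharpened below $90^o$: the horizontal component can legitimately collapse to zero in the admissible noise range.
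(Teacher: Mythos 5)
Your proposal follows essentially the same route as the paper's own proof, which for Theorem~\ref{th:the3} consists only of the observation that $atan2$ is bounded above by $90^o$ together with a pointer to the monotonicity-plus-extreme-noise-realization argument of Theorem~\ref{th:the2} ("details omitted due to the space limit") --- exactly the computation you spell out. Your added attention to the fact that, unlike in Theorem~\ref{th:the2}, no hypothesis pins down the sign of the horizontal difference $\Delta x$ for two points on the obstacle is a genuine subtlety that the paper's abbreviated proof passes over, and it correctly explains why the upper bound cannot be tightened below $90^o$.
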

\begin{proof}
The function $atan2$ is upper bounded by $90^o$ in all domains.
Similarly, the lower bound can be found with the proof of Theorem~\ref{th:the2}. Details omitted due to the space limit.
\end{proof}

The assumptions $y_{r}cos(\xi_r) \geq y_{r-1}cos(\xi_{r-1})$ and $y_{r}sin(\xi_r) \leq y_{r-1}sin(\xi_{r-1})$ are mild
because $R_{r}cos(\xi_r) \geq R_{r-1}cos(\xi_{r-1})$ and $R_{r}sin(\xi_r) \leq R_{r-1}sin(\xi_{r-1})$ hold for most of the cases, and $\epsilon_i$ is around $1$.
For the same reason, the assumption $y_{i}sin(\xi_i) \leq y_{i-1}sin(\xi_{i-1})$ is mild as well.

\subsection{Summary}
\label{sub:model_summary}

Using $\xi$ and $H_L$ from the Waymo Open Dataset~\cite{sun2020scalability}, \eqref{eq:minimum_height} yields Figure~\ref{fig:hmin_simple} for various $\alpha_{th}$ and $D$ (X-Axis).
Figure~\ref{fig:hmin_incline} shows the same when $\alpha_o = 60^o$ as in Section~\ref{sub:model_incline}.
Figure~\ref{fig:hmin_gap} is based on Section~\ref{sub:model_gap}, assuming obstacle is $0.271$ $m$ above the ground, based on ground clearance of a sedan~\cite{matlab_sedan}.
The varying nature of the colored plot is due to the discrete nature of \lidar beams.
To get a usable bound, like \eqref{eq:ideal}, we determine a linear max curve fit for each case, where the dashed line is always greater than or equal to the original plot. Shaded region is where the obstacle is always detected.

In this section, we have developed detectability model for the Depth Clustering algorithm.
We find that not only is such analysis possible, but it also yields human perceptible bounds.
This work stands as the first step in establishing the verifiability of classical obstacle detection algorithms and their use in AV for safety-critical obstacle avoidance.

\section{Evaluation}
\label{sec:eval}

\begin{figure}[tp]
    \centering
    \subfloat[\centering \label{fig:gt_too_large}Example showing 3D ground truth box extending too far beyond the obstacle.]
    {{\fbox{\centering\includegraphics[width=.8\linewidth,keepaspectratio]{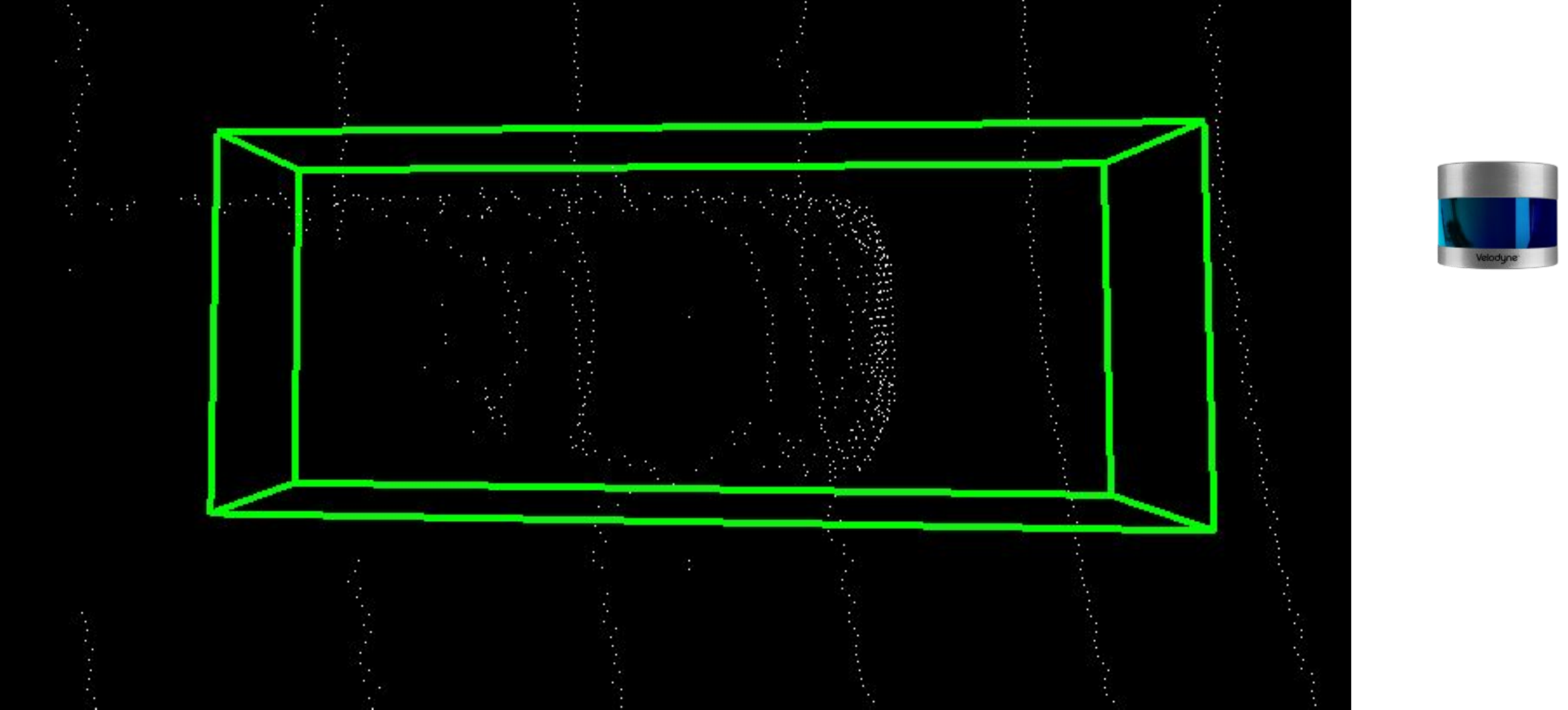}}}}%
    \hfill
    \subfloat[\centering \label{fig:coverage_example} GT extra size causing coverage (73.5\% here) to fall just below the threshold (75\%).]
    {{\fbox{\includegraphics[width=.74\linewidth,keepaspectratio]{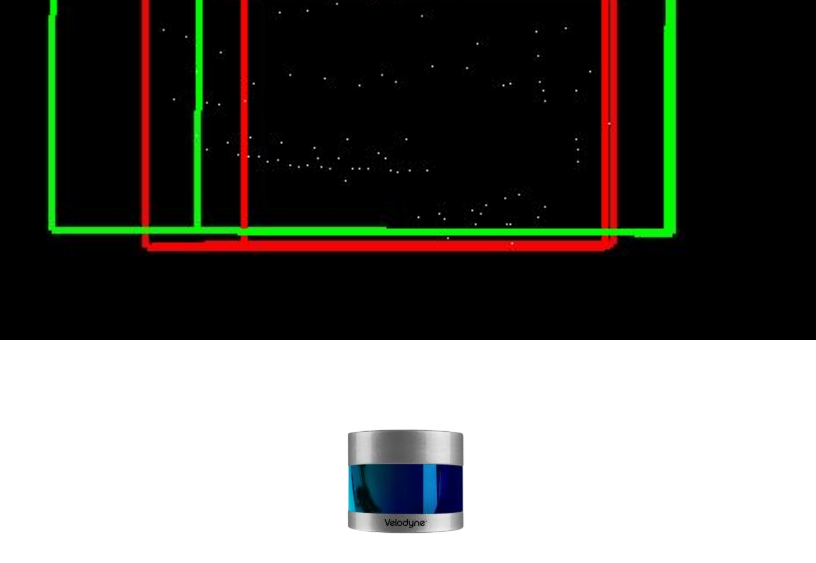}}}}%
    \hfill
    \subfloat[\centering \label{fig:dist_threshold} TP as the detection includes the visible edge of the vehicle towards the AV.]
    {{\fbox{\includegraphics[width=.7\linewidth,keepaspectratio]{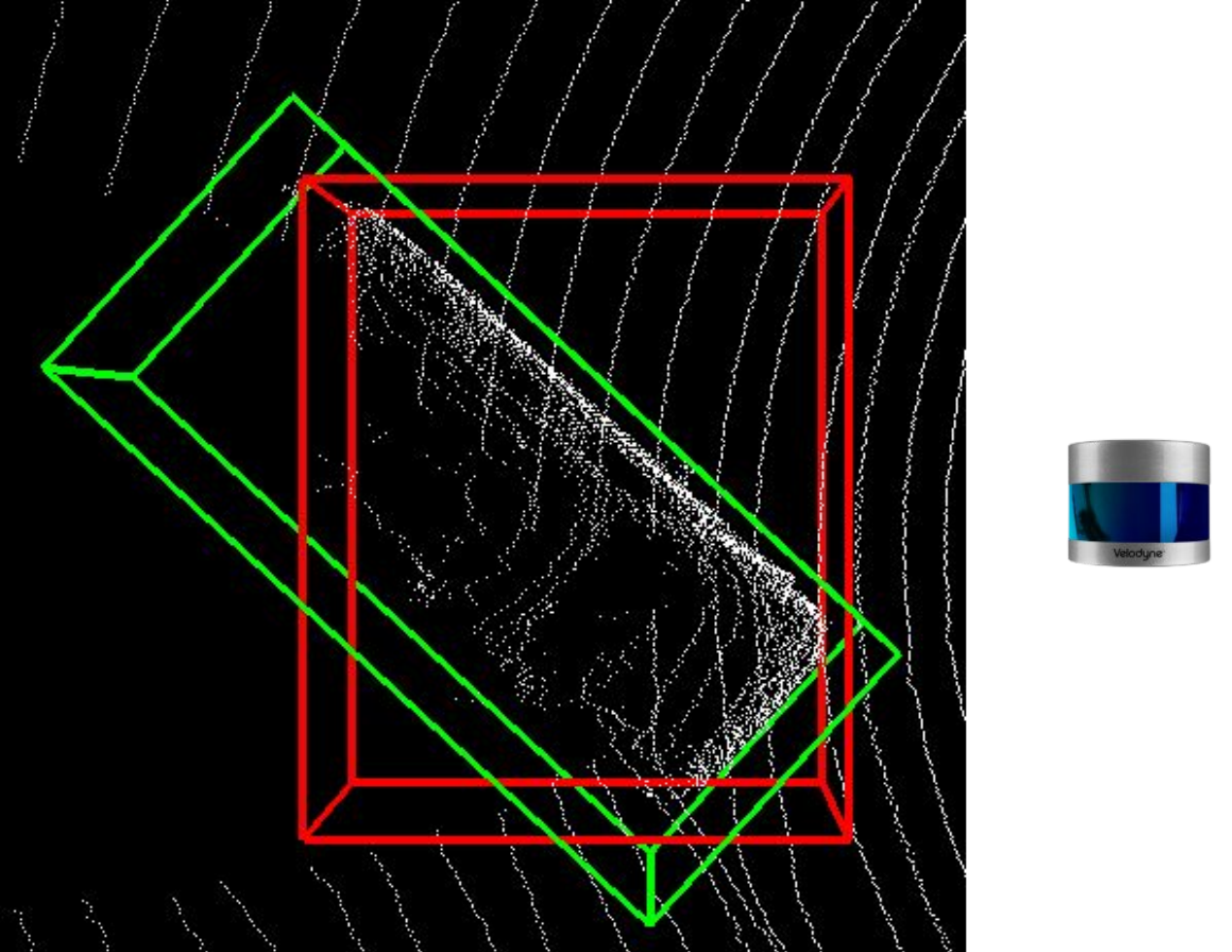}}}}%
    \hfill
    \subfloat[\centering \label{fig:oversegment} Example of Oversegmentation]
    {{\fbox{\includegraphics[width=.7\linewidth,keepaspectratio]{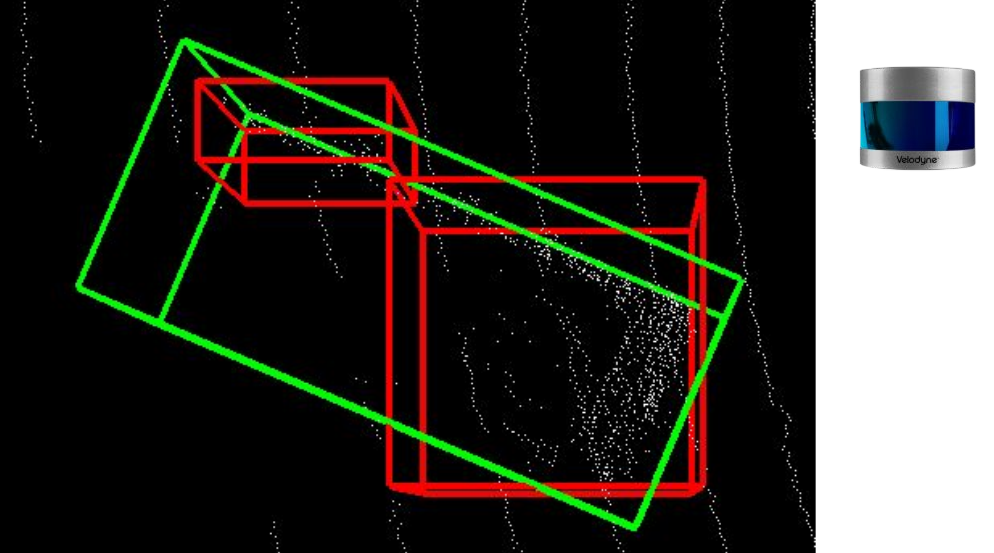}}}}%
    \hfill
    \caption{
        \label{fig:gt_issues}
        Examples from manual inspections of FN candidates.
        The white points are in 3D space as returned by \lidar \xspace \ie the point cloud.
        Ground truth 3D labels are drawn as green bounding boxes, while detection bounding boxes are red.
        The point cloud is shown from a top view, and the \lidar direction is indicated with the sensor image.
    }
\end{figure}

\begin{table}[tp]
    \centering
    \caption{\label{tab:eval_clear}
        Results for False Negative (FN) Evaluation}
    \begin{tabular}{r|l}
        \toprule

        Count & Category \\

        \midrule

        98224    	& Total obstacles (Vehicles, Pedestrian, Cyclist, Unknown) \\
        93030    	& Total without obstacles closer than $D_{min}$ (\texttt{C3}  $\S$\ref{sec:constraints}) \\
        7565     	& Obstacles that pose collision risk ($\S$\ref{sub:rrr})~\cite{9460196}  \\
        7402		& Detected meeting minimally sufficient requirements ($\S$\ref{sec:minimal_ca}) \\
        153    		& Ground Truth larger than obstacle \\
        10    		& Oversegmentation \\
        0		    & Remaining FN count at 75\% coverage.  \\
        \bottomrule
    \end{tabular}
\end{table}

In this section, we evaluate the obstacle detection algorithm using real-world sensor data from Waymo Open Dataset~\cite{sun2020scalability}.
\label{sec:eval_fn}
We first determine if the algorithm meets the safety-critical requirements proposed in this work ($\S$\ref{sec:minimal_ca}).
For obstacle existence fault or FN evaluation we randomly select 16 clear weather scenes from the 202 scenes in the validation dataset.
The random downsampling was necessitated by the manual effort involved in analyzing FN candidates.
The scenes included various scenarios, including heavy to low traffic, residential with pedestrians, city and highway driving.

Table~\ref{tab:symbol_summary} summarizes the parameters used. $\alpha_{th}$ was set at $10^o$.
Distance overestimation error was bounded to $10$ $cm + 5\%$ of actual distance. The constant was chosen as a small value to keep the error bound low at close distances.
However, since the depth perception accuracy reduces with distance an additional percentage-based threshold allows accommodation of sensor limits while having a low impact.
A limited coverage threshold of $75\%$ was used as a threshold for True Positive (TP) detection, chosen to parallel the strict metric for IOU from COCO Dataset~\cite{coco}.
All FN candidates were analyzed manually using a visualizer provided by Bogoslavskyi and Stachniss~\cite{bogoslavskyi17pfg}.
We enhanced the visualizer
to aid the manual inspection.
Table~\ref{tab:eval_clear} summarizes the results.

\subsubsection{GT Counts}
We first determine the total number of GT in the included scenes. All classes, excluding road signs, were counted to yield a total of \textbf{98224}.
Obstacles closer than first beam ($R_{1,c}$) on the ground, \ie closer than $D_{min}$ are removed as per \texttt{C3} in Section~\ref{sec:constraints}, reducing the count of GT to \textbf{93030}.
Obstacles that do not pose a risk of collision are also removed~($\S$\ref{sub:rrr}), leaving
\textbf{7565} obstacles.

\subsubsection{Automated Evaluation}
We run the detection algorithm and evaluate results based on requirements described in Section~\ref{sec:minimal_ca}. \textbf{7402} True Positives and \textbf{163} FN candidates were found. The FN candidates were then manually inspected.

\subsubsection{GT Dimension Error}
\label{sec:gt_execption}

The most common reason for erroneous FN indication were inaccurate GT labels.
This inaccuracy of GT was determined based on visual inspection of point clouds.
The GT were visibly larger or offset from the actual obstacle.
Figures~\ref{fig:gt_too_large}~and~\ref{fig:coverage_example} show such examples.
Drawn as per the provided GT, the green box is clearly either larger or offset from the contained obstacle.
Similarly, as shown in Figure~\ref{fig:coverage_example}, the GT inaccuracy is enough to bring the coverage below the $75\%$ threshold.
In some cases, the error is small, \eg Figure~\ref{fig:dist_threshold}. However when close to the AV, the small GT error can still be larger than the distance overestimation error allowed. We consider these detections as TP after ascertaining that the detection bounding box meets the requirements.
A total of \textbf{153} FN candidates were found to fall in this category.

\subsubsection{Oversegmentation}

Figure~\ref{fig:oversegment} shows a case where the obstacle was adequately detected but segmented into more than one bounding box. Since the second bounding box did not meet the distance threshold, the automated analysis ignored it.
However, given the presence of both bounding boxes, we argue that this detection should be considered True Positive.
The points on the obstacle were not erroneously considered to be drivable ground.
\textbf{10 }instances of this scenario on the same vehicle were found in consecutive frames.

\subsubsection{Obstacle Existence Fault}
No FN, \ie obstacle existence faults were found.
This is not surprising given the low minimum height bounds determined in Section~\ref{sec:model} and Figure~\ref{fig:hmin}.
The max curve fits in Figures~\ref{fig:hmin_simple},~\ref{fig:hmin_incline}~and~\ref{fig:hmin_gap} are conservative linear approximations.
Ignoring the max curve fits, the actual bounds in Figures~\ref{fig:hmin_simple},~\ref{fig:hmin_incline}~and~\ref{fig:hmin_gap} were less than $0.5$ $m$ in all cases within the sensor range of $75$ $m$.

\section{Discussion}
\label{sec:discussion}

{\it \textbf{Generality}}:
The safety-critical requirements for collision avoidance are applicable to all ground based autonomous vehicles.
The \lidar parameters used apply broadly to all \lidar sensors.
The constraints are also generally applicable, except \texttt{C3} which is based on the algorithm.
Other verifiable algorithms may have different algorithmic constraints.
The detectability model in this work is specific to the chosen algorithm.
However, the analysis shows that it is indeed possible to derive strict human perceptible bounds on the detectability of these algorithms and serves as guidance for verification of similar algorithms.

{\it \textbf{Human Comprehensibility}}:
As shown in this work, safety standards, policies and limitations of the example algorithm are in human comprehensible definitions.
This makes such policies realistically implementable and enforceable.
Human comprehensible limitations also implicitly protect against adversarial objects~\cite{cao2019adversarial,tu2020physically}.
This is in contrast to the machine learning based solutions where requirements, faults and adversarial perturbations are not always expressible in human perceptible forms~\cite{huang2017safety}.

{\it \textbf{Requirements}}:
Section~\ref{sec:minimal_ca} establishes a minimal set of requirements for collision avoidance in AV.
However, additional features of obstacles, if determined in a verifiable manner, can improve the safety envelop, reducing overly conservative behaviors.
For example, the collision risk model uses obstacle and AV velocity to determine if there is a potential risk of collision.
If velocities cannot be determined reliably, a conservative default high velocity  threshold must be used.

{\it \textbf{Adversarial Objects}}:
The failure modes for analyzable algorithms are well defined and expressible in limitations like minimum height and slopes from ground.
Whereas DNN based detectors can have varied failure modes, including fully or partially designed adversarial objects~\cite{cao2019adversarial,tu2020physically,abdelfattah2021adversarial,zhu2021can}.
The difference in failure modes suggests than an ensemble of the two would be robust against attacks using adversarial designed objects.

\section{Future Work}
\label{sec:future_work}

{\it \textbf{Integration}}:
In this work we focus on \textit{detection} of faults in obstacle existence detection. However, the \textit{reaction} to them, \ie fault handling has its own challenges.
The integration of verifiable algorithms within existing AV pipelines and fault handling built upon it are the focus of our future research.

{\it \textbf{Precision Improvements}}:
The complete Depth Clustering algorithm includes methods for improving the detection precision.
Future works will address the expansion of the detectability model to include these methods.

{\it \textbf{Verifiable Algorithms}}:
Despite the necessity of DNN in AV, this work shows that there is a role for analyzable algorithms.
Therefore further research is warranted to improve such physical model backed verifiable algorithms.
Improvements like lower physical limitations bounds and lower FP rates within those bounds would make these algorithms have reduced impact to the performance of the AV while maintaining the same safety guarantees.

{\it \textbf{Weather}}:
Impediments like rain, fog, dust or smoke distort the \lidar returns and can result in \lidar beams  returning with low enough intensity to not be recorded or causing a false early return~\cite{heinzler2019weather,wallace2020full}, \ie violating \texttt{C1} ($\S$\ref{sec:constraints}).
Detectability in the presence of such faults is an avenue for future research.

\section{Conclusion}
\label{sec:conclusion}

This paper identifies requirements for safety-critical obstacle detection and presents a safety analysis of an obstacle detection algorithm.
The results encourage a thorough separation of mission and safety-critical requirements in autonomous vehicles.
Furthermore, verifiable algorithms could fulfill the critical safety requirements offloading that responsibility from DNN based solutions that remain inherently unverifiable.
\section*{Acknowledgment}

The material presented in this paper is based upon work supported by
the National Science Foundation (NSF) under grant no. CNS 1932529, ECCS 2020289,
the Air Force Office of Scientific Research (AFOSR) under grant no. \#FA9550-21-1-0411,
the National Aeronautics and Space Administration (NASA) under grant no. 80NSSC20M0229, AWD-000577-G1,
and University of Illinois Urbana-Champaign under grant no. STII-21-06.
Any opinions, findings, and conclusions or recommendations expressed in
this publication are those of the authors and don't necessarily
reflect the views of the sponsors.

\bibliographystyle{IEEEtran}
\bibliography{ref}

\end{document}